\newcommand{\eat}[1]{}
\newcommand{\y}{\ensuremath{\textbf{y}}\xspace}
\newcommand{\x}{\ensuremath{\textbf{x}}\xspace}
\newcommand{\X}{\ensuremath{\textbf{X}}\xspace}
\newcommand{\B}{\ensuremath{\textbf{B}}\xspace}
\newcommand{\N}{\ensuremath{\textbf{N}}\xspace}
\newcommand{\U}{\ensuremath{\textbf{U}}\xspace}
\newtheorem{defn}{Definition}
\newtheorem{lm}{Lemma}
\newtheorem{thm}{Theorem}
\newtheorem{col}{Corollary}
\newcommand\reallywidehat[1]{%
	\savestack{\tmpbox}{\stretchto{%
			\scaleto{%
				\scalerel*[\widthof{\ensuremath{#1}}]{\kern-.6pt\bigwedge\kern-.6pt}%
				{\rule[-\textheight/2]{1ex}{\textheight}}
			}{\textheight}%
		}{0.5ex}}%
	\stackon[1pt]{#1}{\tmpbox}%
}
\renewcommand{\hat}[1]{\reallywidehat{#1}}
\newcommand{\ignore}[1]{}
\newif\iftr
\newcommand{\apdx}[1]{Appendix~\ref{#1}\xspace}
\newcommand{\apdx}[1]{the supplemental material\xspace}
\begin{document}

\title{Robust High-Dimensional Linear Regression}
\author{Chang Liu\\
	University of Maryland\\
	liuchang@cs.umd.edu\\
	 \And
	 Bo Li\\
	 Vanderbilt University\\
	 bo.li.2@vanderbilt.edu\\
	 \And
	 Yevgeniy Vorobeychik\\
	 Vanderbilt University\\
	 yevgeniy.vorobeychik@vanderbilt.edu\\
	 \And Alina Oprea \\
Northeastern University\\
a.oprea@ccs.neu.edu
	}
\date{}
\maketitle

\begin{abstract}
The effectiveness of supervised learning techniques has made them
ubiquitous in research and practice.
In high-dimensional settings, supervised learning commonly relies on dimensionality reduction to
improve performance and identify the most important factors in
predicting outcomes.
However, the economic importance of learning has made it a natural
target for adversarial manipulation of training data, which we term
\emph{poisoning attacks}.
Prior approaches to dealing with robust supervised learning rely on
strong assumptions about the nature of the feature matrix, such as
feature independence and sub-Gaussian noise with low variance.
We propose an integrated method for robust regression that relaxes
these assumptions, assuming only that the feature matrix can be well
approximated by a low-rank matrix.
Our techniques integrate improved robust low-rank matrix approximation
and robust principle component regression, and yield strong performance guarantees.
Moreover, we experimentally show that our methods significantly
outperform state of the art both in running time and prediction error.
\end{abstract}

\section{Introduction}

Machine learning has come to be widely deployed in a broad array of
applications.
An important class of applications of machine learning uses it to enable scalable security solutions, such as spam filtering, traffic analysis,
and fraud detection~\cite{androutsopoulos2000learning,chan1998toward,stolfo1997credit}. In these applications, reliability of the machine learning
system is crucial to ensure service quality and enforce security, but
strong incentives exist to reduce learning efficacy (e.g., to bypass
spam filters).
Indeed, recent research
demonstrates that existing systems are vulnerable in the presense of
adversaries who can manipulate either the training (i.e. the poisoning
attack) or test data (i.e. the evasion attack)
\cite{xiao2015feature,lowd2005adversarial,li2014feature,li2015scalable}.
Consequently, an important agenda in machine learning research is to
develop learning algorithms that are robust to data manipulation.
In this work,
we focus on designing supervised learning algorithms that are robust to poisoning attacks.

Existing research on robust machine learning dates back to algorithms
for robust PCA \cite{robustpca}.
Most of them assume that a portion of the underlying dataset is
randomly, rather than adversarially, corrupted.
Recently, Chen et al.~\cite{chen2013robust} and Feng et al.~\cite{feng2014robust} considered
recovery strategy when the corruption is adversarially chosen to achieve some malicious
goal.
The former considers a robust linear and the latter
logistic regression models.
However, both make an extremely strong assumption that each feature is
sub-Gaussian with vanishing variance (as $O(\frac{1}{n})$) and features are independent,
rendering them impractical and severely limiting the scope of associated theoretical guarantees.

In this work, we propose a novel algorithmic framework for making linear
regression robust to data poisoning.
Our framework does not require either sub-Gaussian or independence
assumptions on the feature matrix $\X$.
Instead, we assume that $\X$ is generated through adversarial corruption of an
approximately low-rank matrix.
Our goal is to make regression which uses dimensionality reduction,
such as PCA, robust to adversarial manipulation.
The technical challenge is two-fold: first, we must make sure that the
dimensionality reduction step can reliably recover the low-rank
subspace, and
second, that the resulting regression performed on the subspace can
recover sufficiently accurate predictions, in both cases despite both
noise and adversarial examples.
While these problems have previously been considered in isolation,
ours is the first integrated approach.
More significantly, the effectiveness of our approach relies on weaker
assumptions than prior art, and, as a result, our proposed practical algorithms
significantly outperform state-of-the-art alternatives.

Specifically, we assume that labels $\y$ are a linear function of the
true feature matrix $\X_\star$ with additive zero-mean noise.
In addition, $\X_\star$ is corrupted with noise, and the adversary
subsequently adds a collection of corrupted rows to the training data.
In this model, our approach involves two parts: first, we develop a
novel robust matrix factorization algorithm which correctly recovers
the subspace whenever this is possible, and second, a trimmed
principle component regression, which uses the recovered basis and
trimmed optimization to estimate linear model parameters.



Our main contributions are as follows:
\begin{itemize}[itemsep=0pt]
\item \textbf{Novel algorithm for robust matrix factorization: }
We develop a novel algorithm that reliably recovers the low-rank
subspace of the feature matrix despite both noise (about which we make
few assumptions) and adversarial examples.
We prove that
our algorithm is effective iff subspace recovery is possible.

\item \textbf{Novel robust regression algorithm with significantly
    weaker assumptions: }
In contrast to prior robust regression work, we do not require either
feature independence or low-variance sub-Gaussian distribution of
features.
We prove that our algorithm can reliably learn
the low-dimensional linear model despite data corruption and noise.



\item \textbf{Significant improvement in running time and accuracy: }
We present efficient algorithms which significantly outperform
prior art in running time and prediction efficacy.
\end{itemize}

\noindent{\bf Related Work: }
Robust PCA is widely used as a statistical tool for data analysis and dimensionality
reduction that is robust to i.i.d.\ noise \cite{robustpca}.
However, these methods cannot deal with ``malicious'' corruptions, where
the sophisticated adversaries can manipulate rows from the subspace of
the true feature matrix.
In contrast, our approach handles both noise and malicious corruption.
Recently, robust learning for several learning
models, such as linear and logistic regression have also been proposed \cite{chen2013robust,feng2014robust}.
The limitation of these approaches is their assumption that the
feature matrix is sub-Gaussian with vanishing variance, and that
features are independent.
Our approach, in contrast, only assumes that the true feature matrix
(prior to corruption) is low rank.
Yan \emph{et al.} proposed an outlier pursuit algorithm to deal with
the matrix completion problem with corruptions \cite{yan2013exact},
and a similar algorithm is applied by Xu \emph{et al.} to deal with the noisy version of feature matrix \cite{xu2010}. However, these methods only consider
the matrix recovery problem and are not scalable. A more scalable
algorithm based on the alternating minimization approach was recently
proposed by Rodriguez et al.~\cite{rodriguez2013fast}; however, this
method does not consider data noise or corruption.
A number of heuristic techniques have also been proposed for poisoning
attacks~\cite{barni2014source,rubinstein2009antidote,biggio2011bagging}
for other problems, such as robust anomaly detection source
identification.

\section{Problem Setup and Solution Overview}
\label{sec:prob}

We start with the pristine training dataset of $n$ labeled examples,
$\langle \X_\star, \y_\star\rangle$, which subsequently suffers from
two types of corruption: noise is added to feature vectors, and the
adversary adds $n_1$ malicious examples (feature vectors and labels)
to best mislead the learning.
We assume that the adversary has full knowledge of the learning algorithm.
The learner's goal is
to learn a model on the corrupted dataset which is similar to the true
model.
The feature space is high-dimensional, and the learner will perform
dimensionality reduction prior to learning.
In particular, we assume that $\X_\star$ is low-rank with a basis
$\B$, and we assume that the true model is the associated
low-dimensional linear regression.

Formally, observed training data is generated as follows:
\begin{enumerate}[itemsep=0pt]
       \item {\bf Ground truth:} $\y_\star = \X_\star \beta^\star =
         \U \beta^\star_U$, where $\beta^\star$ is the true model,
         $\beta^\star_U$ is its low-dimensional representation, and
         $\U = \X_\star B$ is the low-dimensional embedding of $\X_\star$.

        \item {\bf Noise: } $\X_0 = \X_\star + \N$, where $\N$ is a
          noise matrix with $\|\N\|_\infty \le \epsilon$; $\y_0 = \y_\star + e$, where $e$ is
          i.i.d.\ zero-mean
          Gaussian noise with variance $\sigma$.

          \item {\bf Corruption: } The attacker adds $n_1$
            adversarially crafted examples $(\x_{a}, \y_{a})$ to get
            $\langle\X, \y\rangle$, which maximally
            skews prediction performance of low-dimensional linear regression.

	

\end{enumerate}

To formally characterize how well the learner performs in this setting, we define
(1) a \emph{model function} $f(\X_0, \y_0)$ which is the
model learned on $\langle \X_\star, \y_\star\rangle$; (2) a
\emph{loss function} $l$; and (3) a \emph{threshold function} $\delta(z)$ which
takes as input $z>1$, and is increasing in $z$.
Our metric is \emph{$(f, l, \delta)$-tolerance}:
\begin{defn}[$(f, l, \delta)$-tolerance]
	We say that learner $\mathcal{L}$ is \emph{$(f, l, \delta)$-tolerant},
    if for any attacker, and any $z>1$, we have
	\(l(\mathcal{L}(\X, \y), f(\X_0, \y_0))\leq \delta(z)\)
	with probability at least $1-c_1 z^{-c_2}$, for some constant $c_1, c_2>0$.
\end{defn}
In our setting, $f(\X_0, \y_0)$ return $\beta^\star$ and $l$ is expected quadratic
loss $E_x\big[(x(\hat{\beta} - \beta^\star))^2\big]$.

For convenience, we let $\mathcal{O}$ denote the set of (unknown) indices of the
samples in $\X$ coming from $\X_0$ and
$\mathcal{A}=\{1,...,n+n_1\}-\mathcal{O}$ the set of indices
for adversarial samples in $\X$. For an index set $\mathcal{I}$ and matrix
$M$, $M^\mathcal{I}$ denotes the sub-matrix containing only rows in
$\mathcal{I}$; similar notation is used for vectors.
We define $\gamma=\frac{n_1}{n}$ as the
\emph{corruption ratio}, or the ratio of corrupted and pristine data.

\subsection{Solution overview and paper organization}

Our goal is to design a learner $\mathcal{L}$ to estimate the
coefficients $\hat{\beta}$ of the true model $\beta^\star$ using
low-dimensional embedding of a high-dimensional model.
We achieve this goal in two steps: (1) recover the subspace $\B$ of $\X_\star$; (2)
project $\X$ onto $\B$, and estimate $\hat{\beta}$ using
robust principle component regression. The key challenge is that an adversary can design
corrupted data to interfere both with the first and second step of the
process.

For the first step (Section~\ref{sec:rmf}), we develop a \emph{robust subspace recovery}
algorithm which can account for both noise $\N$ and adversarial
examples in correctly recovering the subspace of $\X_\star$.
We characterize necessary and sufficient conditions for successful
subspace recovery, showing that our algorithm succeeds whenever
recovery is possible.
The challenge in the second step (Section~\ref{sec:tpcr}) is that the adversary can construct $\X^\mathcal{A}$ from the
same subspace as $\X_\star$, but with the different distribution of
$\langle \X^\mathcal{A}, \y^\mathcal{A}\rangle$ from $\langle \X_\star, \y_\star\rangle$.
To address this, we propose the
\emph{trimmed principle component regression} algorithm to minimize
the loss function over only a subset of the dataset ensuring that the
adversary can have only a limited impact by adding $n_1$ arbitrary corrupted
samples without having these instances being discarded.
Our theoretical results demonstrate that the combined approach is
\emph{$(f, l, \delta)$-tolerant} learning algorithm.
Finally, in Section~\ref{sec:est}, we present an efficient practical
implementation of our methods, which we evaluate in Section~\ref{sec:exp}.


In our analysis, we use the corruption parameter $n_1$ and the rank $k$ of the 
low-dimensional embedding to characterize the theoretical results. In our experiments, 
however, we show that we only require a lower bound on $n_1$ and an upper bound on $k$ 
for our techniques to work.
\newcommand{\rank}[1]{{\ensuremath{\mathrm{rank}(#1)}}}
\newtheorem{problem}{Problem Definition}

\section{Robust Subspace Recovery}
\label{sec:rmf}

In this section, we discuss how to recover the low-rank subspace of
$\X_\star$ from $\X$. Our goal is to exactly recover the low-rank
subspace, i.e., returning a basis for $\X_\star$. We show sufficient
and necessary conditions for this problem to be solvable, and provide
algorithms when this is possible. As a warmup, we first discuss
the noise-free version of the problem, and then present our results
for the problem with noises.
Proofs of the theorems presented in this section can be found in
\apdx{proof:rmf}. Formally, we consider the following problem:
\begin{problem}[Subspace Recovery] Design an algorithm
$\mathcal{L}_\text{recovery}$, which takes as input $\X$,
and returns
a set of vectors $B$ which form the basis of $\X_\star$.
\label{prob:recover}
\end{problem}

\subsection{Warmup: Noise-free Subspace Recovery}

We first consider an easier version of Problem~\ref{prob:recover} with $N=0$.
In this case, we know that $\X^\mathcal{O}=\X_\star$.
We assume that we know $\rank{\X_\star}=k$ (or have an upper
bound on it).
Below we demonstrate that there exists a sharp threshold $\theta$ on $n_1$ such
that whenever $n_1 < \theta$, we can solve Problem~\ref{prob:recover}
exactly with high probability, whereas if $n_1 \ge \theta$,
Problem~\ref{prob:recover} cannot be solved.
To characterize this threshold, we define the cardinality of the \emph{maximal rank $k-1$
subspace} $\mathit{MS}_{k-1}(\X_\star)$ as the optimal value of the following
problem:

\[\max_{\mathcal{I}} |\mathcal{I}|
~~\text{s.t.}~~\rank{\X_\star^\mathcal{I}}\leq k-1\]

Intuitively, the adversary can insert $n_1=n-\mathit{MS}_{k-1}(\X_\star)$
samples to form a rank $k$ subspace, which does not span $\X_\star$. The
following theorem shows that in this case, there is indeed no learner
that can successfully recover the subspace of $\X_\star$.

\begin{thm} If $n_1+\mathit{MS}_{k-1}(\X_\star)\geq n$, then there
    exists an adversary such that no algorithm $\mathcal{L}_\text{recover}$
    solves Problem~\ref{prob:recover} with probability greater than $1/2$.
    \label{thm:3:1}
\end{thm}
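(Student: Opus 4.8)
The plan is to prove this impossibility result by a two-point, Le Cam--style indistinguishability argument: I will exhibit two legitimate ground-truth instances with \emph{different} low-rank row spaces that, under admissible adversarial corruption, produce \emph{exactly the same} observed matrix $\X$. Since any learner sees only $\X$, it must return the same basis in both cases, yet a single basis cannot be correct for two distinct $k$-dimensional subspaces; hence it must fail on at least one of the two instances with probability at least $1/2$.

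First I set up the two subspaces. Let $S = \mathrm{rowspace}(\X_\star)$ with $\dim S = k$, let $V$ be a $(k-1)$-dimensional subspace attaining $\mathit{MS}_{k-1}(\X_\star)$, and let $\mathcal{I}$ index the $m := \mathit{MS}_{k-1}(\X_\star)$ rows of $\X_\star$ that lie in $V$. By maximality, the remaining rows $Q := \X_\star^{\mathcal{I}^c}$ ($n-m$ of them) together with $V$ span all of $S$. Because the ambient dimension exceeds $k$, I can choose a second $k$-dimensional subspace $S' \supseteq V$ with $S' \neq S$, and select $n-m$ rows $R$ whose span together with $V$ equals $S'$.

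The crux is a counting argument that glues these into one observed matrix. I form $\X$ from four blocks of rows: the shared block $\mathcal{I}$ ($m$ rows in $V$); block $Q$ ($n-m$ rows); block $R$ ($n-m$ rows); and a padding block $T$ of $n_1 - (n-m)$ extra rows drawn inside $V$. The total is $m + (n-m) + (n-m) + (n_1-(n-m)) = n + n_1$, and $T$ is well defined precisely because the hypothesis gives $n_1 \ge n - m = n - \mathit{MS}_{k-1}(\X_\star)$. This single matrix $\X$ now admits two readings: (i) pristine $\X_\star$ consisting of $\mathcal{I}\cup Q$ (row space $S$) with adversarial rows $R\cup T$; and (ii) pristine $\X_\star'$ consisting of $\mathcal{I}\cup R$ (row space $S'$) with adversarial rows $Q\cup T$. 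Both pristine matrices have rank $k$, each adds exactly $n_1$ adversarial rows, and each instance itself satisfies the hypothesis (each keeps $m$ rows in $V$, so its $\mathit{MS}_{k-1}\ge m$).

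To conclude, run any (possibly randomized) $\mathcal{L}_\text{recover}$ on $\X$, producing an output basis $B$ whose distribution is identical under (i) and (ii) since the input is identical. Correctness demands that $B$ span $S$ in case (i) and $S'$ in case (ii); as $S \neq S'$ these events are disjoint, so $\Pr[B \text{ spans } S] + \Pr[B \text{ spans } S'] \le 1$. Averaging over the two instances yields success probability at most $1/2$, so for one of them---an instance meeting the hypothesis---there is an adversary against which no learner succeeds with probability exceeding $1/2$. I expect the main obstacle to be the construction itself: simultaneously guaranteeing that both reinterpretations are valid rank-$k$ instances with genuinely distinct subspaces while matching the identical multiset of observed rows, and checking that the padding count is exactly what the inequality $n_1 + \mathit{MS}_{k-1}(\X_\star) \ge n$ supplies.
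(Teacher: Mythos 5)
Your proposal is correct and follows essentially the same route as the paper's own proof: both construct a single observed matrix $\X$ admitting two valid readings as (pristine, adversarial) decompositions that share the maximal rank-$(k-1)$ block and differ by a direction outside $\mathrm{rowspace}(\X_\star)$, then conclude no learner can output a correct basis for both with probability above $1/2$. Your four-block construction with the padding block $T$ is a slightly cleaner piece of bookkeeping than the paper's (where the second reading has $\mathit{MS}_{k-1}(\X_\star)+n_1$ pristine rows and $n-\mathit{MS}_{k-1}(\X_\star)$ adversarial ones rather than a symmetric $n$/$n_1$ split), but the underlying argument is the same.
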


On the other hand, when $n_1$ is below this threshold, we can use the
following simple algorithm to recover the subspace of $\X_\star$:

\begin{algorithm}
	\caption{Exact recover algorithm for Problem~\ref{prob:recover} (Noisy-free)}\label{alg:exact-pure}
	We search for a subset $\mathcal{I}$ of indices, such that $|\mathcal{I}|=n$,
    and $\rank{\X^\mathcal{I}}=k$\\
	\textbf{return} a basis of $\X^\mathcal{I}$.
\end{algorithm}
In fact, we can prove the following theorem.
\begin{thm} If $n_1+\mathit{MS}_{k-1}(\X_\star)<n$, then Algorithm
    ~\ref{alg:exact-pure} solves Problem~\ref{prob:recover} for any adversary.
    \label{thm:3:2}
\end{thm}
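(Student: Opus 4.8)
The plan is to show two things: that Algorithm~\ref{alg:exact-pure} is feasible (its search always succeeds), and that \emph{every} set it could return yields a basis of the true subspace. Write $S_\star$ for the $k$-dimensional row space of $\X_\star$. Since $N=0$ we have $\X^\mathcal{O}=\X_\star$, so all $n$ pristine rows lie in $S_\star$, and the goal is exactly to return a basis of $S_\star$. Feasibility is immediate: the index set $\mathcal{O}$ has $|\mathcal{O}|=n$ and $\rank{\X^\mathcal{O}}=\rank{\X_\star}=k$, so the search space is nonempty and the algorithm terminates with some basis.

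The heart of the argument is a counting step applied to an arbitrary qualifying set $\mathcal{I}$ (that is, $|\mathcal{I}|=n$ and $\rank{\X^\mathcal{I}}=k$). I would split $\mathcal{I}=\mathcal{I}_O\cup\mathcal{I}_A$ with $\mathcal{I}_O=\mathcal{I}\cap\mathcal{O}$ the pristine indices and $\mathcal{I}_A=\mathcal{I}\cap\mathcal{A}$ the adversarial ones. Because there are only $n_1$ adversarial rows in total, $|\mathcal{I}_A|\le n_1$, hence $|\mathcal{I}_O|=n-|\mathcal{I}_A|\ge n-n_1>\mathit{MS}_{k-1}(\X_\star)$, where the final strict inequality is precisely the hypothesis $n_1+\mathit{MS}_{k-1}(\X_\star)<n$. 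By the definition of $\mathit{MS}_{k-1}(\X_\star)$ as the largest cardinality of a pristine index set with rank at most $k-1$, any pristine subset strictly exceeding this threshold must have rank at least $k$; since these rows come from $\X_\star$ and $\rank{\X_\star}=k$, we conclude $\rank{\X^{\mathcal{I}_O}}=k$, i.e.\ the selected pristine rows already span $S_\star$.

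Finally I would combine this with the rank constraint to pin down the subspace exactly. We have $S_\star=\mathrm{rowspace}(\X^{\mathcal{I}_O})\subseteq\mathrm{rowspace}(\X^\mathcal{I})$, while $\dim\mathrm{rowspace}(\X^\mathcal{I})=\rank{\X^\mathcal{I}}=k=\dim S_\star$. A $k$-dimensional space containing a $k$-dimensional space must equal it, so $\mathrm{rowspace}(\X^\mathcal{I})=S_\star$ and the returned basis is a basis of $\X_\star$. Since this holds for every qualifying $\mathcal{I}$ and for any adversarial placement of the $n_1$ rows, correctness does not depend on which set the search happens to return nor on the adversary, which gives the ``for any adversary'' conclusion.

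The main obstacle is getting the counting inequality and the direction of the $\mathit{MS}_{k-1}$ definition exactly right: I must be careful that ``strictly more than $\mathit{MS}_{k-1}(\X_\star)$ pristine rows'' genuinely forces rank $k$ (and not merely rank $\ge k$ without the $\rank{\X_\star}=k$ ceiling). Everything after the counting step is routine linear-algebra bookkeeping about containment of equal-dimensional subspaces, so I do not expect difficulty there.
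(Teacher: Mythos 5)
Your proof is correct and follows essentially the same route as the paper's: the same counting step $|\mathcal{I}\cap\mathcal{O}| \geq n-n_1 > \mathit{MS}_{k-1}(\X_\star)$, the same appeal to the definition of $\mathit{MS}_{k-1}$ to force $\rank{\X_\star^{\mathcal{I}\cap\mathcal{O}}}=k$, and the same conclusion via equality of equal-dimensional subspaces. Your explicit feasibility check (that $\mathcal{O}$ itself is a qualifying set) and the spelled-out containment argument are minor additions of rigor that the paper leaves implicit, not a different approach.
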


Theorems~\ref{thm:3:1} and~\ref{thm:3:2} together give the necessary and
sufficient conditions on when Problem~\ref{prob:recover} is solvable, and
Algorithm~\ref{alg:exact-pure} provides a solution.
We further show
an implication of these theorems on the corruption ratio $\gamma$.
We can prove that $\mathit{MS}_{k-1}(\X_\star)\geq k-1$
(see \apdx{proof:rmf}). Combining this with Theorem~\ref{thm:3:1}, we
can have the following upper bound on $\gamma$.
\begin{col} If $\gamma\geq 1-\frac{k-1}{n}$, then Problem~\ref{prob:recover}
    cannot be solved.
    \label{col:3:3}
\end{col}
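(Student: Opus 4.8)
The plan is to derive Corollary~\ref{col:3:3} directly from Theorem~\ref{thm:3:1} by substituting the claimed lower bound $\mathit{MS}_{k-1}(\X_\star)\geq k-1$ into the hypothesis of the theorem. First I would recall the definition of the corruption ratio, $\gamma=\frac{n_1}{n}$, so that the assumption $\gamma\geq 1-\frac{k-1}{n}$ is equivalent to $n_1\geq n-(k-1)$, i.e.\ $n_1+(k-1)\geq n$. The goal is then to show that this forces the hypothesis $n_1+\mathit{MS}_{k-1}(\X_\star)\geq n$ of Theorem~\ref{thm:3:1}, from which the unsolvability conclusion follows immediately.

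The key step is the inequality $\mathit{MS}_{k-1}(\X_\star)\geq k-1$, which the excerpt defers to the appendix but which I would establish as the crux of the argument. Since $\mathit{MS}_{k-1}(\X_\star)$ is the maximum cardinality of an index set $\mathcal{I}$ with $\rank{\X_\star^\mathcal{I}}\leq k-1$, it suffices to exhibit one such set of size $k-1$. Because $\rank{\X_\star}=k$, any selection of $k-1$ rows of $\X_\star$ spans a subspace of dimension at most $k-1$, so choosing $\mathcal{I}$ to be any $k-1$ indices gives $\rank{\X_\star^\mathcal{I}}\leq k-1$; taking the maximum over all feasible $\mathcal{I}$ yields $\mathit{MS}_{k-1}(\X_\star)\geq k-1$. (A mild technical point to confirm is that $n\geq k-1$ so that such a set exists, which holds since $\X_\star$ has $n$ rows and rank $k$.)

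Chaining the two facts, I would argue: given $\gamma\geq 1-\frac{k-1}{n}$, multiply through by $n$ to get $n_1\geq n-(k-1)$, hence
\[
n_1+\mathit{MS}_{k-1}(\X_\star)\;\geq\; n_1+(k-1)\;\geq\; \big(n-(k-1)\big)+(k-1)\;=\;n.
\]
Thus the hypothesis of Theorem~\ref{thm:3:1} is satisfied, and by that theorem there exists an adversary against which no algorithm $\mathcal{L}_\text{recover}$ solves Problem~\ref{prob:recover} with probability exceeding $1/2$, i.e.\ the problem cannot be solved.

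I do not expect a serious obstacle here, since the corollary is essentially a substitution combined with an elementary linear-algebra bound; the only place demanding care is justifying $\mathit{MS}_{k-1}(\X_\star)\geq k-1$ cleanly, and in particular handling the edge case where the maximal rank-$(k-1)$ subspace could in principle be realized by more than $k-1$ rows (which only strengthens the inequality and is harmless for our direction). The argument uses only the monotonicity of the bound and the fact that the inequality goes the "right way," so no additional probabilistic reasoning beyond what Theorem~\ref{thm:3:1} already supplies is needed.
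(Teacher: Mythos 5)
Your proposal is correct and follows essentially the same route as the paper: you establish $\mathit{MS}_{k-1}(\X_\star)\geq k-1$ by exhibiting any set of $k-1$ row indices (exactly the paper's appendix lemma), then chain $n_1+\mathit{MS}_{k-1}(\X_\star)\geq n_1+(k-1)\geq n$ and invoke Theorem~\ref{thm:3:1}. No gaps; the argument matches the paper's proof step for step.
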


\subsection{Dealing with Noise}

We now consider Problem~\ref{prob:recover} with noise. Before we discuss
the adversary, we first need to assume that the uncorrupted version is
solvable. In particular, we assume that $X_\star$ optimizes the following
problem:
\begin{subequations}
\begin{align}
	&\min_{\X'} || \X_0 - \X' ||\\
	&\text{s.t.}~~\rank{\X'}\leq k.
\end{align}
\end{subequations}
Without otherwise mentioned, we use $||\cdot||$ to denote the Frobenius norm.
We put no additional restrictions on $\N$ except above. Note that this
assumption is implied by the classical PCA problem~\cite{eckart1936approximation,hotelling1933analysis,jolliffe2002principal}. We want to
emphasize on the optimal value of the above problem. We denote this value
to be \emph{noise residual}, denoted as $\mathit{NR}(\X_0) = \N$. Noise residual is a key component to characterize the
necessary and sufficient condition for the solvability of Problem
~\ref{prob:recover}.

Characterization of the defender's ability to accurately recover the
true basis $\B$ of $\X_\star$ after the attacker adds $n_1$ malicious
instances stems from the attacker's ability to mislead the defender
into thinking that some other basis, $\bar{\B}$, better represents $\X_\star$.
Intuitively, since the defender does not know $\X_0$, $\X_\star$, or
which $n_1$ rows of the data matrix $\X$ are adversarial, this comes
down to the ability to identify the $n-n_1$ rows that correspond to
the correct basis (note that it will suffice to obtain the correct
basis even if some adversarial rows are used, since the adversary may
be forced to align malicious examples with the correct basis to evade
explicit detection).
As we show below, whether the defender can succeed is determined by
the relationship between the noise residual $\mathit{NR}(\X_0)$ and
\emph{sub-matrix residual}, denoted as $\mathit{SR}(\X_0)$, which we define as the value optimizing the following problem:
\begin{subequations}
\label{E:SR}
\begin{align}
&\min_{\mathcal{I}, \B, \U} || \X_0^\mathcal{I} - \U\bar{\B} ||\\
\text{s.t.} &\quad \rank{\bar{\B}}=k, \bar{\B}\bar{\B}^T=I_k, \X_\star \bar{\B}^T\bar{\B}\neq \X_\star\\
&\mathcal{I}\subseteq\{1,2,...,n\}, |\mathcal{I}|=n-n_1.
\end{align}
\end{subequations}

We now explain the above optimization problem. $U$ and $\bar{\B}$ are $n\times k$ and
$k\times m$ matrixes separately. Here $\bar{\B}$ is a basis which the
attacker ``targets''; for convenience,
we require $\bar{\B}$ to be orthogonal (i.e.,
$\bar{\B}\bar{\B}^T=I_k$).
Since the attacker succeeds only if they can induce a basis different
from the true $\B$,  we require that
$\bar{\B}$ does not span of $\X_\star$, which is equivalent to saying
$\X_\star \bar{\B}^T \bar{\B}\neq \X_\star$.
Thus, this optimization problem seeks
$n-n_1$ rows of $\X_\star$, where $\mathcal{I}$ is the corresponding
index set.
The objective is to minimize the distance between
$\X_0^\mathcal{I}$ and the span space of the target basis $\bar{\B}$,
(i.e., $||\X_0^\mathcal{I}-\U\bar{\B}||$).

\begin{algorithm}
	\caption{Exact recovery algorithm for Problem~\ref{prob:recover}}
    \label{alg:exact-noisy}
    Solve the following optimization problem and get $\mathcal{I}$.
	\begin{equation}
	{\begin{array}{c}
        \min_{\mathcal{I}, L} || \X^\mathcal{I} - L ||\\
	   \text{s.t.}~~\rank{L}\leq k, I\subseteq\{1, ..., n+n_1\}, |I|=n
    \end{array}}
    \label{eq:exact-noisy}
	\end{equation}
	\textbf{return} a basis of $\X^\mathcal{I}$.
\end{algorithm}

To understand the importance of $\mathit{SR}(\X_0)$, consider
Algorithm~\ref{alg:exact-noisy} for recovering the basis of
$\X_\star$, $\B$.
If the optimal objective value of optimization problem~\eqref{E:SR}, $\mathit{SR}(\X_0)$,
exceeds the noise $\mathit{NR}(\X_0)$, it follows that the defender
can obtain the correct basis $\B$ using Algorithm~\ref{alg:exact-noisy}, as it yields a better low-rank
approximation of $\X$ than any other basis.
Else, it is, indeed, possible for the adversary to induce an incorrect
choice of basis.
The following theorem formalizes this argument.

\begin{thm} If $\mathit{SR}(\X_0)\leq \mathit{NR}(\X_0)$,
	then no algorithm recovers the exact subspace of $\X_\star$ with
    probability greater than $1/2$. If $\mathit{SR}(\X_0)>\mathit{NR}(\X_0)$,
	then Algorithm~\ref{alg:exact-noisy} solves Problem~\ref{prob:recover}.
    \label{thm:3:3}
\end{thm}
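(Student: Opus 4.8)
The plan is to prove the two implications by separate means: the achievability half (Algorithm~\ref{alg:exact-noisy} succeeds when $\mathit{SR}(\X_0)>\mathit{NR}(\X_0)$) is an optimization argument about the program \eqref{eq:exact-noisy}, whereas the impossibility half is an information-theoretic two-world indistinguishability argument. For achievability, I would first upper-bound the optimal value of \eqref{eq:exact-noisy} by plugging in the ``honest'' feasible point $\mathcal{I}=\mathcal{O}$, $L=\X_\star$, whose objective is $\|\X^\mathcal{O}-\X_\star\| = \|\X_0-\X_\star\| = \mathit{NR}(\X_0)$, so that the optimum is at most $\mathit{NR}(\X_0)$.

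Next I would lower-bound the objective of any feasible point whose recovered rank-$k$ row space (the row space of the optimal $L$, with orthonormal basis $\bar{\B}$) fails to span $\X_\star$. The key combinatorial fact is pigeonhole: since $|\mathcal{O}|=n$, $|\mathcal{A}|=n_1$, and $|\mathcal{I}|=n$, the set $\mathcal{J}=\mathcal{I}\cap\mathcal{O}$ of selected clean rows has $|\mathcal{J}|\ge n-n_1$. Restricting the residual to these rows can only shrink the Frobenius norm, and on them $\X=\X_0$ while $L^\mathcal{J}=\U'\bar{\B}$ for some $\U'$; after trimming $\mathcal{J}$ to exactly $n-n_1$ rows, the triple $(\mathcal{J},\bar{\B},\U')$ is feasible for \eqref{E:SR}, so the objective is at least $\mathit{SR}(\X_0)$. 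Hence any wrong-subspace solution costs at least $\mathit{SR}(\X_0)>\mathit{NR}(\X_0)\ge\text{OPT}$ and is strictly suboptimal; every optimizer therefore yields a basis spanning $\X_\star$, and since both spaces have rank $k$ the recovered basis equals $\B$.

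For the impossibility direction I would set up two hypotheses producing identical observations. From $\mathit{SR}(\X_0)\le\mathit{NR}(\X_0)$ I extract a witness: an index set $\mathcal{I}^\ast\subseteq\{1,\dots,n\}$ with $|\mathcal{I}^\ast|=n-n_1$, an orthonormal basis $\bar{\B}$ not spanning $\X_\star$, and coefficients $\U^\ast$ with $\|\X_0^{\mathcal{I}^\ast}-\U^\ast\bar{\B}\|\le\mathit{NR}(\X_0)$. I then design one observed matrix $\X$ admitting two explanations: in World~1 the clean rows are $\mathcal{O}$ and the subspace is $\B$; in World~2 the clean rows are $\mathcal{I}^\ast$ together with the $n_1$ injected rows, and the subspace is $\bar{\B}$, while the rows indexed by $\{1,\dots,n\}\setminus\mathcal{I}^\ast$ play the adversarial role. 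The adversary places the injected rows exactly inside $\bar{\B}$ (zero residual there), so in World~2 the rank-$k$ fit with row space $\bar{\B}$ has noise residual $\|\X_0^{\mathcal{I}^\ast}-\U^\ast\bar{\B}\|\le\mathit{NR}(\X_0)$, no larger than the true noise. Since the two worlds present identical data $\X$ but distinct true subspaces $\B\neq\bar{\B}$, any (possibly randomized) algorithm has the same output distribution in both, and under a uniform prior over the worlds is correct with probability at most $1/2$.

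The hard part will be verifying that World~2 is a legitimate instance of the generative model, not merely that the observed data coincide. I must check (i) that the fabricated noise still satisfies the entrywise bound $\|\bar{\N}\|_\infty\le\epsilon$, which forces me to choose the witness $(\bar{\B},\U^\ast)$ with care rather than relying only on the Frobenius inequality, and (ii) that the World~2 rank-$k$ fit is genuinely the PCA optimum of its clean data, which I expect to enforce by scaling the injected rows to carry dominant mass along $\bar{\B}$ so that $\bar{\B}$ is the best rank-$k$ subspace. Reconciling these two validity conditions with the symmetric data construction is the crux; the hypothesis $\mathit{SR}(\X_0)\le\mathit{NR}(\X_0)$ is precisely what certifies that the fake world's residual does not exceed the true one, which is what makes the two instances indistinguishable.
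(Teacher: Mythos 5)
Your proposal is correct and is essentially the paper's own proof: your achievability half (the honest feasible point $\mathcal{I}=\mathcal{O}$, $L=\X_\star$ bounds the optimum of \eqref{eq:exact-noisy} by $\mathit{NR}(\X_0)$, while any wrong-basis solution, restricted to its at least $n-n_1$ selected clean rows and trimmed to exactly $n-n_1$ of them, is feasible for \eqref{E:SR} and hence costs at least $\mathit{SR}(\X_0)>\mathit{NR}(\X_0)$) and your impossibility half (two indistinguishable worlds built from a witness of \eqref{E:SR}, with $n_1$ rows injected in the span of $\bar{\B}$) coincide with the arguments in \apdx{proof:rmf}. The ``crux'' you flag does not put you behind the paper: your check (i) is moot because Section~3.2 places no entrywise restriction on $\N$ beyond $\X_\star$ being the rank-$k$ optimum of $\X_0$, and your check (ii) --- that the fake world's clean data genuinely has $\bar{\B}$ as its PCA optimum --- is exactly the step the paper itself does not prove, instead perturbing the injected rows by noise of norm $\mathit{NR}(\X_0)-\mathit{SR}(\X_0)$ so that both worlds have residual $\mathit{NR}(\X_0)$ and asserting the required optimality ``by definition.''
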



To draw connection between the noisy case and the noise-free case, we can view
Theorem~\ref{thm:3:1} and~\ref{thm:3:2} as special cases of Theorem~~\ref{thm:3:3}.
\begin{thm} When $\N=0$, $\mathit{SR}(\X_0)>\mathit{NR}(\X_0) = 0$ if and
only if $n_1+\mathit{MS}_{k-1}(\X_\star)<n$.
\label{thm:3:4}
\end{thm}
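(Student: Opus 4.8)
The plan is to first eliminate the noise residual and then translate the positivity of $\mathit{SR}(\X_0)$ into a purely combinatorial statement about $\mathit{MS}_{k-1}(\X_\star)$. Since $\N=0$ we have $\X_0=\X_\star$, which is exactly rank $k$, so the best rank-$\le k$ approximation of $\X_0$ is $\X_0$ itself and $\mathit{NR}(\X_0)=0$; this disposes of the equality $\mathit{NR}(\X_0)=0$ and lets me focus on the equivalence $\mathit{SR}(\X_0)>0 \iff n_1+\mathit{MS}_{k-1}(\X_\star)<n$. I would prove the contrapositive form, namely $\mathit{SR}(\X_0)=0 \iff \mathit{MS}_{k-1}(\X_\star)\ge n-n_1$. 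The crux is a geometric reformulation: because the objective in \eqref{E:SR} is a Frobenius norm, $\mathit{SR}(\X_0)=0$ holds iff there is a feasible triple $(\mathcal{I},\U,\bar{\B})$ with $\X_\star^\mathcal{I}=\U\bar{\B}$, i.e.\ iff some choice of $n-n_1$ rows of $\X_\star$ lies entirely inside the $k$-dimensional row space $V:=\mathrm{rowspace}(\bar{\B})$ of an orthonormal basis with $V\neq\mathrm{rowspace}(\X_\star)$ (the latter being exactly the constraint $\X_\star\bar{\B}^T\bar{\B}\neq\X_\star$).

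Next I would prove the two directions of this geometric equivalence. For the forward direction, suppose $\mathit{SR}(\X_0)=0$ with witness $(\mathcal{I},\bar{\B})$. All rows of $\X_\star^\mathcal{I}$ lie in $V$, so $\rank{\X_\star^\mathcal{I}}\le k$. If $\rank{\X_\star^\mathcal{I}}=k$, then $\mathrm{rowspace}(\X_\star^\mathcal{I})$ is a $k$-dimensional subspace contained both in $V$ and in $\mathrm{rowspace}(\X_\star)$ (the rows of the submatrix are rows of $\X_\star$); since all three spaces have dimension $k$ this forces $V=\mathrm{rowspace}(\X_\star)$, contradicting feasibility. Hence $\rank{\X_\star^\mathcal{I}}\le k-1$, and $\mathcal{I}$ witnesses $\mathit{MS}_{k-1}(\X_\star)\ge n-n_1$. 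For the converse, given any $\mathcal{I}$ with $|\mathcal{I}|=n-n_1$ and $\rank{\X_\star^\mathcal{I}}\le k-1$, I extend $\mathrm{rowspace}(\X_\star^\mathcal{I})$ (dimension $\le k-1$) to a $k$-dimensional subspace $V$ that avoids equality with $\mathrm{rowspace}(\X_\star)$ by adjoining a direction outside $\mathrm{rowspace}(\X_\star)$, which exists because the ambient dimension $m$ exceeds $k$ in the high-dimensional regime. Choosing $\bar{\B}$ to be any orthonormal basis of $V$ makes $\X_\star^\mathcal{I}=\U\bar{\B}$ solvable with zero residual while satisfying all constraints, so $\mathit{SR}(\X_0)=0$.

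Finally I would close the loop with $\mathit{MS}_{k-1}$: the existence of an index set of size exactly $n-n_1$ and rank $\le k-1$ is equivalent to $\mathit{MS}_{k-1}(\X_\star)\ge n-n_1$, since any subset of a rank-$\le(k-1)$ index set still has rank $\le k-1$, so the maximizer can always be shrunk to size $n-n_1$. Combining the steps gives $\mathit{SR}(\X_0)=0 \iff \mathit{MS}_{k-1}(\X_\star)\ge n-n_1 \iff n_1+\mathit{MS}_{k-1}(\X_\star)\ge n$, and negating yields the theorem. I expect the main obstacle to be the forward rank argument together with a clean treatment of the non-spanning constraint $\X_\star\bar{\B}^T\bar{\B}\neq\X_\star$: I must argue carefully that a rank-$k$ submatrix cannot live in any $k$-dimensional subspace other than $\mathrm{rowspace}(\X_\star)$, and in the converse I must ensure the extended subspace $V$ genuinely differs from $\mathrm{rowspace}(\X_\star)$, which quietly relies on $m>k$.
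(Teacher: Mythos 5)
Your proof is correct and follows essentially the same route as the paper's own argument: with $\N=0$ you get $\mathit{NR}(\X_0)=0$, identify $\mathit{SR}(\X_0)=0$ with the existence of an exact witness $\X_\star^\mathcal{I}=\U\bar{\B}$ for a feasible non-spanning basis, and show this holds precisely when some index set of size $n-n_1$ has $\rank{\X_\star^\mathcal{I}}\leq k-1$, i.e.\ $\mathit{MS}_{k-1}(\X_\star)\geq n-n_1$. Your write-up is in fact more careful than the paper's terse chain of implications, since you spell out both the rank-$k$ forcing argument (a rank-$k$ submatrix can only live in $\mathrm{rowspace}(\X_\star)$ itself) and the subspace-extension construction with its implicit reliance on $m>k$, both of which the paper leaves unstated.
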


\newcommand{\diag}[2]{\ensuremath{\text{diag}_{#1}\{#2\}}}

\section{Trimmed Principal Component Regression}
\label{sec:tpcr}

In this section, we present trimmed principal component regression (T-PCR)
algorithm. The key idea is to leverage the principal component regression
(PCR) approach to estimate $\hat{\beta}$, but during the process trimming
out those malicious samples that try to deviate the estimator from the true
ones. In the following, we present the approach, which is similar to the
standard PCR approach, though we do not require computing the exact
singular vectors of $\X_\star$.

Assume we recover a basis $\B$
of $\X_\star$. Without loss of generality, we
assume that $\B$ is an orthogonal basis of $k$ row vectors. Since $\B$ is a basis for
$\X_\star$, we assume $\X_\star=\U_\star\B$. Then we know that, by optimization (1),
$\U_\star=\text{argmin}_U||\X_0 - U\B||$. We compute $\U=\text{argmin}_U ||\X-U\B||$,
and, by definition, we know $\U_\star = \U^\mathcal{O}$. By OLS estimator, we know
that $\U^T=(\B\B^T)^{-1}\B\X^T$, and thus $\U=\X\B^T$.

To estimate $\y=\X_\star\beta+e$, we assume $\beta_U=\B\beta$.
Since $\X_\star=\U_\star\B$, we convert the estimation problem of $\beta$
from a high dimensional space to the estimation problem of $\beta_U$ from a
low dimensional space, such that
$y=\U\beta_U+e$. After estimating for $\hat{\beta_U}$, we can convert it back
to get $\hat{\beta}=\B\hat{\beta_U}$. Notice that this is similar to
principal component regression~\cite{jolliffe1982note}.

However, the adversary may corrupt $n_1$ rows
in $\U$ to fool the learner to make a wrong estimation on $\hat{\beta_U}$,
and thus on $\hat{\beta}$.
%
To mitigate this problem, we design Algorithm~\ref{alg:1}.
\begin{algorithm}[h]
	\caption{Trimmed Principal Component Regression}\label{alg:1}
    \textbf{Input:} $\X, \y$
    \begin{enumerate}[itemsep=0pt]
    \item Use Algorithm~\ref{alg:exact-noisy} to compute a basis from $\X$, and orthogonalize it to get $\B$
    \item Project $\X$ onto the span space of $\B$ and get $\U\leftarrow\X\B^T$
	\item Solve the following minimization problem to get $\hat{\beta_U}$
	\begin{equation}
		\min_{\beta_U} \sum_{j=1}^{n}\{(y_i - u_i\beta_U)^2
            \text{ for }i=1,...,n+n_1\}_{(j)}
		\label{eq:obj}
	\end{equation}
	where $z_{(j)}$ denotes the $j$-th smallest element in sequence $z$.
    \item \textbf{return} $\hat{\beta}\leftarrow\B\hat{\beta_U}$.
    \end{enumerate}
\end{algorithm}
Intuitively, during the training process, we trim out the top $n_1$
samples that maximize the difference between the observed response $y_i$ and
the predicted response $u_i\beta_U$, where $u_i$ denotes the $i$-th row of $U$.
Since we know the variances of these differences are small (i.e.,
recall Section~\ref{sec:prob}, $\sigma$ is the variance of the random noise
$y-x\beta^\star$), these samples corresponding to the largest
differences are more likely to be the adversarial ones.

Next, we theoretically bound the prediction differences between our model and
the linear regression model learnt on $\X_\star, \y_\star$.
\begin{lm}
	Algorithm~\ref{alg:1} returns $\hat{\beta}$, such that for any real value
	$h>1$ with at least $1-ch^{-2}$ probability for some constant $c$, we have
	\begin{equation}
	E_x\big[(x(\hat{\beta} - \beta^\star))^2\big]
        \leq 4\sigma^2\bigg(1+\sqrt{\frac{1}{1-\gamma}}\bigg)^2 \log{c}
	\label{eq:lm:1}
	\end{equation}
	\label{lm:1}
\end{lm}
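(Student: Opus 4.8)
The plan is to reduce the high-dimensional prediction error to a scalar error in the recovered subspace, then control the trimmed least-squares estimator there by combining the optimality of the trim with concentration of the Gaussian noise. First I would invoke Theorem~\ref{thm:3:3} so that the basis $\B$ returned in Step~1 is the true orthonormal basis of $\X_\star$; consequently every true feature row lies in $\mathrm{span}(\B)$, i.e. $x = u\B$ with $u = x\B^T$. Writing $\Delta_U = \hat{\beta}_U - \beta^\star_U$ and using $\B\B^T = I_k$, for any sample from the true distribution $x(\hat{\beta} - \beta^\star) = x\B^T\Delta_U = u\Delta_U$, so the target reduces to $E_x[(x(\hat{\beta}-\beta^\star))^2] = E_u[(u\Delta_U)^2]$. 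The genuine rows then satisfy $y_i = u_i\beta^\star_U + e_i$ (the feature-noise contribution from $\N$ being absorbed into the effective residual), with $e_i$ i.i.d.\ zero-mean Gaussian of variance $\sigma$.

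Next I would extract the basic inequality from the trim. Let $S^\star$, with $|S^\star|=n$, be the index set of the $n$ smallest squared residuals at the optimum. Optimality in \eqref{eq:obj}, evaluated against the feasible competitor $(\beta^\star_U,\mathcal{O})$, gives $\sum_{i\in S^\star}(y_i - u_i\hat{\beta}_U)^2 \le \sum_{i\in\mathcal{O}} e_i^2$. Since $|S^\star| = |\mathcal{O}| = n$ and $|\mathcal{A}|=n_1$, the retained genuine set $\mathcal{G} = S^\star\cap\mathcal{O}$ has $m := |\mathcal{G}| \ge n - n_1 = (1-\gamma)n$; discarding the nonnegative adversarial terms yields $\sum_{i\in\mathcal{G}}(e_i - u_i\Delta_U)^2 \le \sum_{i\in\mathcal{O}} e_i^2$. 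Expanding the square, bounding the cross term $\sum_{i\in\mathcal{G}} e_i(u_i\Delta_U)$ by Cauchy--Schwarz, and dividing by $m$ produces a scalar quadratic inequality in the empirical norm $\|\Delta_U\|_\mathcal{G} := (\frac1m\sum_{i\in\mathcal{G}}(u_i\Delta_U)^2)^{1/2}$, whose solution is $\|\Delta_U\|_\mathcal{G} \le \hat\sigma_\mathcal{G} + \sqrt{n/m}\,\hat\sigma_\mathcal{O}$, where $\hat\sigma_\mathcal{G}^2$ and $\hat\sigma_\mathcal{O}^2$ are the empirical second moments of the noise over $\mathcal{G}$ and $\mathcal{O}$.

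Two concentration steps then finish the argument. Using $m\ge(1-\gamma)n$ gives $\sqrt{n/m}\le\sqrt{1/(1-\gamma)}$. Chi-square concentration of the Gaussian noise shows $\hat\sigma_\mathcal{G},\hat\sigma_\mathcal{O}$ are each $\sigma$ up to a slowly growing factor, which is where the probability $1-ch^{-2}$ and the logarithmic factor in \eqref{eq:lm:1} enter, yielding $\|\Delta_U\|_\mathcal{G}^2 \le \sigma^2(1 + \sqrt{1/(1-\gamma)})^2$ up to that factor. Finally a restricted second-moment (lower-isometry) bound transfers the empirical norm on $\mathcal{G}$ to the population norm, $E_u[(u\Delta_U)^2] \le 4\,\|\Delta_U\|_\mathcal{G}^2$, producing the leading constant $4$ and completing \eqref{eq:lm:1}.

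The hard part will be this last transfer. Because $\mathcal{G} = S^\star\cap\mathcal{O}$ is selected by the trimming, it depends on $\hat{\beta}_U$ and hence on the data and on the adversary's choices, so one cannot apply concentration to a fixed subset; the inequality $E_u[(u\Delta_U)^2]\le 4\|\Delta_U\|_\mathcal{G}^2$ must hold uniformly over all directions $\Delta_U/\|\Delta_U\|$ on $S^{k-1}$ and over all admissible size-$(1-\gamma)n$ subsets the trim could produce. Establishing such a uniform restricted-isometry statement under only weak moment assumptions on $u$ (no sub-Gaussianity, matching the paper's premise) is the crux, and it is precisely what forces the polynomial $h^{-2}$ tail rather than an exponential one; I would handle it with a covering-number argument on $S^{k-1}$ combined with a Chebyshev bound on the per-direction empirical second moment, followed by a union bound over the (for fixed $k$, only polynomially many) relevant subsets.
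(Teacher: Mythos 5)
Your proposal follows essentially the same skeleton as the paper's proof: the reduction to the low-dimensional subspace via $\B\B^T=I_k$, the basic inequality obtained by comparing the trimmed objective at $\hat{\beta_U}$ against the competitor $\beta_U^\star$ evaluated on the pristine rows, discarding the nonnegative adversarial residuals, expanding the square and applying Cauchy--Schwarz to the cross term to get a scalar quadratic inequality in $C=\big(\sum_{i\in\mathcal{G}}(u_i(\beta_U^\star-\hat{\beta_U}))^2\big)^{1/2}$, and solving that inequality using concentration of the Gaussian noise --- every one of these steps appears in the paper in the same order. The one place you genuinely diverge is the final step, the passage from the empirical second moment over the retained pristine rows to the population moment $E_u\big[(u(\beta_U^\star-\hat{\beta_U}))^2\big]$. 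The paper dispatches this in one line: it lets $n\rightarrow+\infty$ and invokes the law of large numbers, which introduces no constant at all (the paper's factor $4$ comes from squaring the per-sample Gaussian tail bound $2\sigma\sqrt{\log h}$, whereas you attribute the $4$ to a lower-isometry constant --- a different but equally legitimate accounting that lands on the same final form). What you flag as ``the hard part'' --- that $\mathcal{G}$ is selected by the trimming and hence depends on the data and the adversary, so neither the law of large numbers nor fixed-subset concentration applies --- is precisely the gap in the paper's own argument; your proposed fix (a uniform lower-isometry bound over directions in $S^{k-1}$ and over admissible subsets, via covering numbers, Chebyshev, and a union bound) is what a rigorous finite-sample proof would actually require, and it also explains the polynomial tail, whereas the paper's statement is finite-sample but its proof is only asymptotic. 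Your chi-square treatment of the noise sums is likewise cleaner than the paper's per-sample tail bounds, which strictly need a union bound over all $n+n_1$ samples that the paper omits. In short: same approach, but yours is careful exactly where the paper is weakest; note also that the lemma as stated writes $\log c$ where both your derivation and the paper's produce $\log h$.
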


We explain the intuition of this Lemma, and defers the detailed proof to
\apdx{proof:tpcr}. If an adversary wants to fool Algorithm~\ref{alg:1},
it needs to generate samples $(u_i, y_i)$, such that the loss function
$(y_i-u_i\hat{\beta_U})^2$ is among the smallest $n$. Since for samples
from $\X_\star, \y_\star$, there loss functions are already bounded
according to $\sigma$, the adversary does not have an ability
to significantly skew the estimator. In particular, if
$\sigma=0$, i.e.,
there is no error while generating $\y_0$ from $\X_\star$, then the
adversary can do nothing when $\gamma<1$, and thus the estimator is the same as the
linear regression's estimator on the uncorrupted data.

As an immediate consequence of Lemma~\ref{lm:1}, we have
\begin{thm} Given $\delta(c)=4\sigma^2\bigg(1+\sqrt{\frac{1}{1-\gamma}}\bigg)^2 \log{c}$,
Algorithm~\ref{alg:1} is $(f, l, \delta(c))$-tolerant.
\end{thm}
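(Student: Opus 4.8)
The plan is to derive the theorem directly from Lemma~\ref{lm:1} by unwinding the definition of $(f, l, \delta)$-tolerance and matching quantities term by term; no new probabilistic argument is needed. First I would recall that, in our setting, the learner output $\mathcal{L}(\X, \y)$ is precisely the estimate $\hat{\beta}$ returned by Algorithm~\ref{alg:1}, the reference model is $f(\X_0, \y_0) = \beta^\star$, and the loss $l$ is the expected quadratic loss. Hence
\[
l(\mathcal{L}(\X, \y), f(\X_0, \y_0)) = E_x\big[(x(\hat{\beta} - \beta^\star))^2\big],
\]
which is exactly the quantity controlled by Lemma~\ref{lm:1}.

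Next, I would fix an arbitrary attacker and an arbitrary $z > 1$, as demanded by the definition of tolerance, and apply Lemma~\ref{lm:1} with its free parameter $h$ instantiated at $h = z$. The lemma then guarantees that with probability at least $1 - c z^{-2}$,
\[
E_x\big[(x(\hat{\beta} - \beta^\star))^2\big] \leq 4\sigma^2\bigg(1+\sqrt{\frac{1}{1-\gamma}}\bigg)^2 \log z = \delta(z).
\]
Setting $c_1 = c$ and $c_2 = 2$, the failure probability $c z^{-2}$ is of the form $c_1 z^{-c_2}$, so the inequality $l(\mathcal{L}(\X, \y), f(\X_0, \y_0)) \leq \delta(z)$ holds with probability at least $1 - c_1 z^{-c_2}$ for every $z > 1$. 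Since Lemma~\ref{lm:1} is stated uniformly over adversaries, this conclusion holds for any attacker, which is precisely the assertion that Algorithm~\ref{alg:1} is $(f, l, \delta(c))$-tolerant.

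The only care required is bookkeeping of notation: the parameter $h$ appearing in Lemma~\ref{lm:1} plays the role of the argument $z$ of the tolerance definition (written $c$ in the theorem's $\delta(c)$), and one must confirm that the right-hand side of the lemma's bound coincides with $\delta$ evaluated at that same argument, and that the polynomial tail $c z^{-2}$ matches the form $c_1 z^{-c_2}$ with the constants identified above. Because all of the substantive work—the trimming analysis together with the concentration bound on the quadratic loss—is carried out inside the proof of Lemma~\ref{lm:1}, the theorem follows as an immediate corollary, and I would expect no genuine obstacle beyond this identification of constants.
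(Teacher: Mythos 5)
Your proposal is correct and matches the paper's own treatment: the paper offers no separate proof, stating the theorem as an immediate consequence of Lemma~\ref{lm:1}, which is precisely the definition-unwinding and constant-matching ($h = z$, $c_1 = c$, $c_2 = 2$) you carry out. You also correctly resolve the paper's sloppy notation, reading the lemma's bound as $\log h$ and the theorem's $\delta(c)$ as $\delta$ evaluated at the tolerance argument, which is the intended interpretation.
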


\section{Practical Algorithms}
\label{sec:est}

Algorithms~\ref{alg:exact-pure},~\ref{alg:exact-noisy},
and~\ref{alg:1} require enumerating a subset of indeces, and are thus all
exponential time. To make our approach practical, we develop
efficient implementations of Algorithms~\ref{alg:exact-noisy} and~\ref{alg:1}.

\subsection{Efficient Robust Subspace Recovery}

Consider the objective function~(\ref{eq:exact-noisy}). Since $\rank{L}\leq k$,
we can rewrite $L=\U\B^T$ where $\U$'s and $\B$'s shapes are $n\times k$, and
$m\times k$ respectively. Therefore, we can rewrite objective~
(\ref{eq:exact-noisy}) as
\[\min_{\mathcal{I}, \U, \B} ||\X^\mathcal{I}-\U\B^T||~~\text{s.t.}~~|\mathcal{I}|=n\]
which is equivalent to
\begin{equation}
\min_{\U, \B} \sum_{j=1}^{n}\{||x_i-u_i\B^T||\text{ for }i=1,...,n+n_1\}_{(j)}
\label{eq:rmf:main}
\end{equation}
where $x_i$ and $u_i$ denote the $i$th row of $\X$ and $\U$ respectively.
Solving Objective~\ref{eq:rmf:main} can be done using alternating minimization,
which iteratively optimizes the objective for $\U$ and $\B$ while fixing the other. Specifically, in the $w$th iteration, we optimize for the following
two objectives:
\[\U^{w+1} = \mathrm{argmin}_{U} ||\X-U(\B^w)^T||\]
\[\B^{w+1} = \mathrm{argmin}_{B} \sum_{j=1}^{n}\{||x_i-u_i^{w+1}B^T||\text{ for }i=1,...,n+n_1\}_{(j)}.\]
Notice that the second step computes the entire $\U$ regardless of the
sub-matrix restriction. This is because we need the entire $\U$ to be computed
to update $\B$. The key challenge is to compute $\B^{w+1}$ in each iteration,
which is, again, a trimmed optimization problem.
The next section presents a scalable solution for such problems.


\subsection{Efficient Algorithm for Trimmed Optimization Problems}

Both robust subspace recovery and optimizing for (\ref{eq:obj}) rely on solving
optimization problems in the form
\[\min_{\theta} \sum_{j=1}^n \{l(y_i, f_\theta(x_i))~\text{for}~i=1,...,n+n_1\}_{(j)}\]
where $f_\theta(x_i)$ computes the prediction over $x_i$ using parameter $\theta$,
and $l(\cdot, \cdot)$ is the loss function.
We refer to such problems as \emph{trimmed optimization problems}. It is easy to see
that solving this problem is equivalent to solving
$$
\begin{array}{c}
\min_{\theta, \tau_1,...,\tau_{n+n_1}} \sum_{i=1}^{n+n_1} \tau_il(y_i, f_\theta(x_i))\\
\text{s.t.}~~0\leq\tau_i\leq 1, \sum_{i=1}^{n+n_1}\tau_i=n
\end{array}
$$
We can use alternating minimization technique to solve this problem, by optimizing
for $\theta$, and $\tau_i$ respectively. We present this in Algorithm~\ref{alg:detail}.
In particular, the algorithm iteratively seeks optimal arguments for $\theta$
and $\tau_1,...,\tau_{n+n_1}$ respectively. Optimizing for $\theta$ is a
standard least square optimization problem. When optimizing
$\tau_1,...,\tau_{n+n_1}$, it is easy to see that $\tau_i=1$ if $l(y_i, f_\theta(x_i))$
is among the largest $n$; and $\tau_i=0$ otherwise. Therefore, optimizing
for $\tau_1,...,\tau_{n+n_1}$ is a simple sorting step.
While this algorithm is not guaranteed to converge to a global optimal, in our
evaluation,we observe that a random start of $\tau$ typically yields
near-optimal solutions.

\begin{algorithm}[t]
	\caption{Trimmed Optimization} 
    \begin{enumerate}[itemsep=0pt]
    \item Randomly assign $\tau_i\in\{0, 1\}$ for $i=1,...,n+n_1$, such that $\sum_{i=1}^{n+n_1}\tau_i = n$
    \item Optimize $\theta^{(j+1)}\leftarrow\mathrm{argmin}_{\theta}
    \sum_{i=1}^{n+n_1} \tau_i^{(j)}l(y_i, f_{\theta^{(j)}}(x_i))$;
    \item Compute $\mathrm{rank}_i$ as the rank of $l(y_i, f_{\theta^{(j+1)}}(x))$ in the ascending order;
    \item Set $\tau_i^{(j+1)}\leftarrow1$ for $\mathrm{rank}_i\leq n$, and $\tau_i^{(j+1)}\leftarrow0$ otherwise;
    \item Update $j\leftarrow j+1$ and go to 2 if there exists $i$ such that $\tau_i^{(j+1)}\neq \tau_i^{(j)}$;
    \item \textbf{return} $\theta^{(j)}$.
    \end{enumerate}
    \label{alg:detail}
\end{algorithm}

The following theorem shows that the algorithm converges.
\begin{thm} Given a loss function $l(y, f_\theta(x))$ such that a lower bound exists, i.e., 
	$\exists B. \forall y, \theta, x. l(y, f_\theta(x))\geq B$, Algorithm~\ref{alg:detail} converges. 
	That is, assuming $\ell_j=\sum_{i=1}^{n+n_1}\tau_i^{(j)}l(y_i, f_{\theta^{(j)}}(x_i))$ is the loss 
	after $j$-th iteration, then we have
\[\lim_{j\rightarrow+\infty} || l_{j+1}-l_j || = 0\]
\label{thm:conv}
\end{thm}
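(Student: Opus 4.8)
The plan is to recognize Algorithm~\ref{alg:detail} as exact block coordinate descent over the two blocks $\theta$ and $\tau=(\tau_1,\dots,\tau_{n+n_1})$, and to show that the resulting objective-value sequence $\{\ell_j\}$ is monotonically non-increasing and bounded below. Convergence of $\ell_j$ then follows from the monotone convergence theorem, and the claim that consecutive differences vanish is an immediate consequence of the fact that every convergent real sequence is Cauchy. The crux is that each of the two updates within an iteration is an exact minimization over one block with the other held fixed, so neither update can increase the objective.

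First I would establish monotonicity by decomposing one iteration into its two steps. Write $g(\theta,\tau)=\sum_{i=1}^{n+n_1}\tau_i\,l(y_i,f_\theta(x_i))$, so that $\ell_j=g(\theta^{(j)},\tau^{(j)})$. Step~2 sets $\theta^{(j+1)}=\argmin_{\theta} g(\theta,\tau^{(j)})$; since $\theta^{(j)}$ is feasible for this minimization, $g(\theta^{(j+1)},\tau^{(j)})\le g(\theta^{(j)},\tau^{(j)})=\ell_j$. Steps~3--4 then assign $\tau_i=1$ to the indices of the $n$ smallest losses $l(y_i,f_{\theta^{(j+1)}}(x_i))$ and $\tau_i=0$ otherwise. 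I would observe that this greedy selection is exactly $\argmin_{\tau} g(\theta^{(j+1)},\tau)$ over the feasible set $\{\tau:\tau_i\in\{0,1\},\ \sum_i\tau_i=n\}$, because minimizing a nonnegatively-weighted linear objective under a fixed-cardinality constraint is achieved by turning on precisely the smallest-cost coordinates. As $\tau^{(j)}$ is itself feasible, $\ell_{j+1}=g(\theta^{(j+1)},\tau^{(j+1)})\le g(\theta^{(j+1)},\tau^{(j)})\le\ell_j$. Chaining the two inequalities gives $\ell_{j+1}\le\ell_j$.

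Next I would establish a uniform lower bound. Using the hypothesis $l(y,f_\theta(x))\ge B$ together with $\tau_i\ge 0$ and $\sum_i\tau_i=n$, each summand satisfies $\tau_i\,l(y_i,f_{\theta^{(j)}}(x_i))\ge \tau_i B$, whence $\ell_j\ge B\sum_i\tau_i^{(j)}=nB$ for every $j$. Thus $\{\ell_j\}$ is non-increasing and bounded below by the fixed constant $nB$, so it converges to some limit $\ell^\star\ge nB$; consequently $\lim_{j\to\infty}|\ell_{j+1}-\ell_j|=0$, which is the stated conclusion (the $\|\cdot\|$ there being absolute value of the scalar loss gap).

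The argument has no deep obstacle; the single point requiring care is justifying that Steps~3--4 realize an exact minimization of $g(\theta^{(j+1)},\cdot)$ rather than a mere heuristic update, since the descent property of the $\tau$-block rests entirely on this optimality of the sorting step. I would also be careful to separate the two partial descent steps inside a single iteration and to track which iterates define $\ell_j$, so that the telescoping inequality $\ell_{j+1}\le\ell_j$ is assembled correctly. Finally I would note, consistent with the remark preceding the theorem, that this result asserts only convergence of the objective values, not convergence of the iterates $(\theta^{(j)},\tau^{(j)})$ nor global optimality.
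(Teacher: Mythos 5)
Your proof is correct and follows essentially the same route as the paper's: the paper's intermediate quantity $\ell_j' = \sum_i \tau_i^{(j)} l(y_i, f_{\theta^{(j+1)}}(x_i))$ is exactly your $g(\theta^{(j+1)},\tau^{(j)})$, and both arguments chain $\ell_j \geq \ell_j' \geq \ell_{j+1}$, invoke the lower bound, and conclude via monotone convergence that consecutive differences vanish. Your write-up is in fact more careful than the paper's, since you justify explicitly that the sorting step is an exact minimization over the $\tau$-block and compute the explicit bound $\ell_j \geq nB$, both of which the paper leaves as ``easy to see.''
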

The proof can be found in Appendix~\ref{proof:conv}.

\ignore{
\subsection{Practical approach dealing with ranks}

Solving linear regress $y=Ax+\epsilon$ is problematic when $A^TA$ is not invertible, i.e., the optimal solution for 
$A$ is non-deterministic. This issue also causes the solution to the matrix factorization problem non-deterministic
especially when the parameter $k$ fed into the algorithm is larger than the matrix's intrinsic rank. Therefore, in 
practice, we typically require $||A||$ to be minimal while optimizing the loss. This is equivalent to adding a ridge
regularization in the loss function. Our experiments will show the effectiveness of this technique.
}

\section{Experimental Results}
\label{sec:exp}


We evaluate the proposed algorithms in comparison with
state-of-the-art alternatives on synthetic data.
For the subspace recovery problem, we compare to two state-of-the-art
approaches: Chen et al.~\cite{chen-icml11} and Xu et
al.~\cite{xu2010}.
We
compare the combined T-PCR algorithm with the recent robust regression
approach~\cite{chen2013robust}, and the standard ridge
regression algorithm.
For most experiments, we set $n+n_1=400$ and $m=400$. The only exception is
that, when we evaluate of the impact of $n+n_1$ on runtime, we vary
$n+n_1$ from 1,000 to 8,000.
We vary the rank $k$ of $\X_\star$, and $n_1$.
Results are averages of 30 runs after dropping the largest and smallest values.

\noindent{\bf Data: }
For a given
$(k, n_1)$, we generate $\X_\star$ as follows:
sample two matrixes $\U, \B$ with shape $n\times k$ and $k\times m$ respectively,
such that each element is sampled independently from $\mathcal{N}(0,
1)$, ensuring that both have rank $k$, and we set $\X_\star = \U\B$.
Next, we generate corruptions $\X^\mathcal{A}$ also as a low rank
matrix by generating $\U^\mathcal{A}$ and $\B^\mathcal{A}$ as above.
For $\B^\mathcal{A}$, we set the first half of
$\B^\mathcal{A}$ by choosing $k/2$ rows of $\X_\star$, and generating the
remaining $k/2$ rows randomly, ensuring that $\B$ has rank $k$.
We then concatenate $\X_\star$ and
$\X^\mathcal{A}=\U^\mathcal{A}\B^\mathcal{A}$ and shuffle
the rows.
We do not add noise to $\X_\star$, unless explicitly
stated. In doing so, we know that $\X^\mathcal{A}$ shares a common subspace
of rank $k/2$ with $\X_\star$, but the two subspaces are still different. This
approach of data generation is significantly more adversarial than
prior methods of generating adversarial instances, as we show below.
To generate labels, we generate a random $\beta^\star$, and $\y$ is
then constructed as described in Section~\ref{sec:prob}, where we
apply the method of Xiao et al.~\cite{xiao2015feature} to create
adversarial labels.
%


\noindent{\bf Runtime:} Figure~\ref{runtime}(a) and (b) presents the runtime comparison results.
Our approach is implemented in Scala without any special optimization,
and both~\cite{chen-icml11} and~\cite{xu2010} are
implemented in Matlab leveraging Matlab's built-in optimizations for matrix
operations.
In Figure~\ref{runtime}(a), we vary the rank from 1 to 20, and fix $n=350, n_1=50$.
Our algorithm is significantly faster than~\cite{chen-icml11}
and~\cite{xu2010}.
%
Figure~\ref{runtime}(b) shows runtime as a function of $n$, rank and
$n_1$ are fixed to 20 and 50 respectively.
We can observe that scalability of~\cite{chen-icml11}
and~\cite{xu2010} is quite poor in the size of the problem.
In contrast, our algorithm remains quite efficient (with running time
under $25$ seconds in all cases).
%


\noindent{\bf Identification rate of corrupted rows: }
Figure~\ref{runtime}(c)
presents the percentage of all corrupted rows identified by the algorithms.
We generate the data fixing intrinsic rank to be $10$, and varying inserted
corruptions from 10 to 150, keeping the total data size to be $400$.
We evaluate our approach varying algorithm parameter $k$ from 10 to 20, and parameter
$n$ from $210$ to $250$. The results show that our approach achieves 100\%
accuracy regardless of the chosen parameters as long as $k$ is no less than the intrinsic
rank, and $n$ is no bigger than the number of pristine rows.
We also compare our approach with prior work, \cite{xu2010} and 
\cite{chen-icml11}, which are identical. We refer to both as Xu et al.~\cite{xu2010}.
We can observe that the identification rate plummets for
$n_1\geq 20$, even though only 5\% of the rows are corrupted.

\begin{figure}[t]
\centering
\begin{tabular}{@{\hskip -0.5em}c@{\hskip -0.5em}c@{\hskip -0.5em}c@{\hskip -0.5em}}
\includegraphics[scale=0.24]{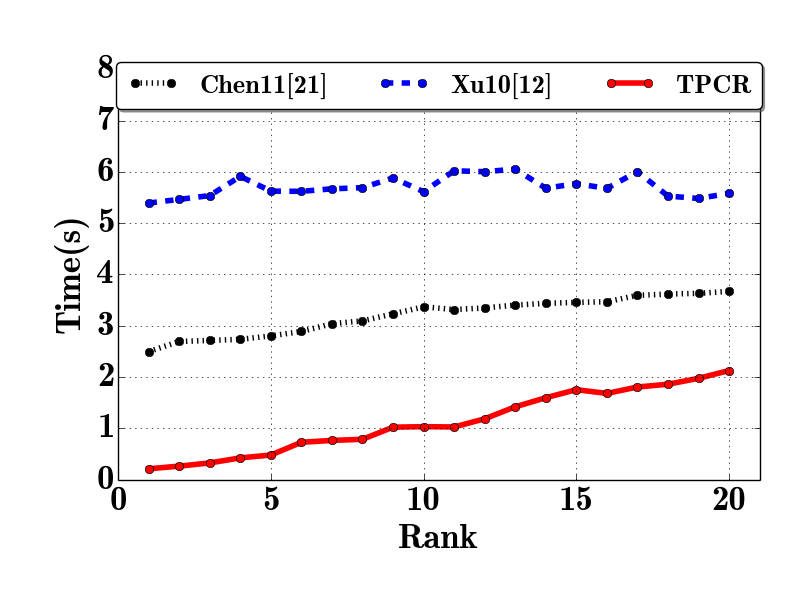}&
\includegraphics[scale=0.24]{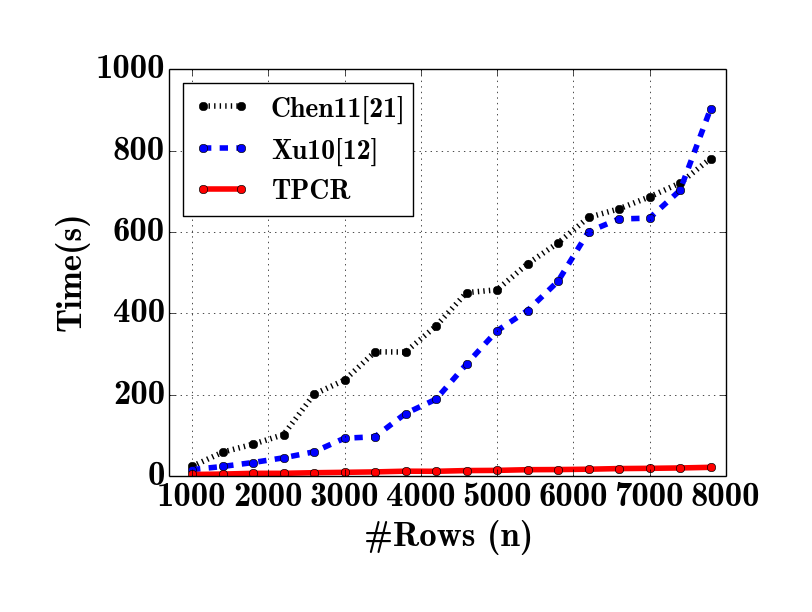}&
\includegraphics[scale=0.24]{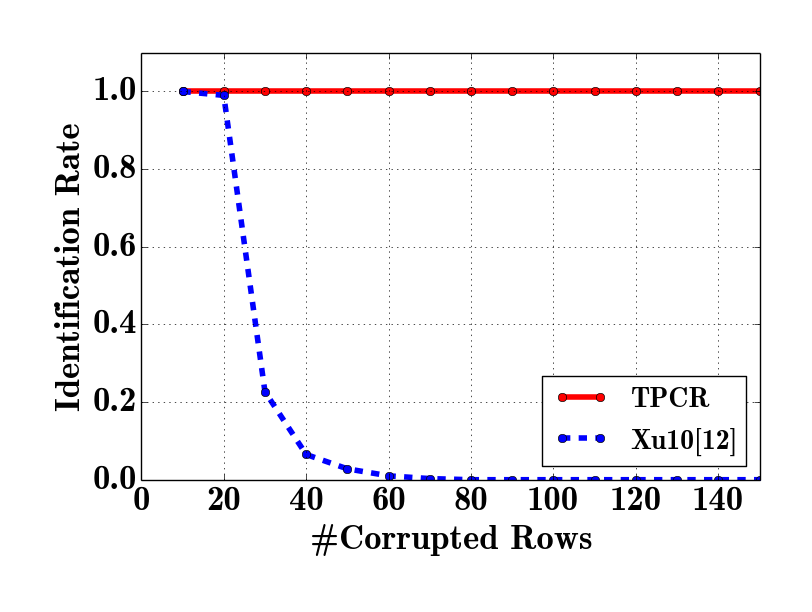}\\
(a) & (b) & (c)
\end{tabular}
\caption{(a) Runtime, as a function of rank. (b) Runtime, as a
  function of the number of rows (n). (c) Rate of correct identification
  of corrupted rows.\vspace{-1em}}
\label{runtime}
\end{figure}

\noindent{\bf Error on noisy data:} We add noise to $\X_\star$ to
evaluate performance on noisy data. Since~\cite{chen-icml11} cannot
handle noise, we only compare with~\cite{xu2010}. On each element
of $\X_\star$, we add a noise sampled from $\mathcal{N}(0, 0.01)$.
Figure~\ref{recovery}(a) and \ref{recovery}(b) show RMSE of the difference
from recovered $\X^\mathcal{O}$ and the true $\X_\star$. This metric is used
by~\cite{xu2010} as well. We use the grayscale to denote the RMSE:
lighter color corresponds to smaller RMSE.
On most test cases our algorithm successfully recovers the true subspace, while~\cite{xu2010}
fails on most cases.
Particularly, when $n_1<120$, our approach can completely
recover the underlying low-rank matrix.
When $n_1$ increases, the condition $\mathit{SR}(\X_0)>\mathit{NR}(\X_0)$
might not hold true, and Theorem~\ref{thm:3:3} says that no algorithm can
recover the true subspace with probability greater than $1/2$. However, this
theorem does not prevent our algorithm succeeding with probability
$<1/2$, which is why we observe several white spots for high $n_1$.

\begin{figure}[t]
\begin{tabular}{@{\hskip -0.5em}c@{\hskip -0.5em}c@{\hskip -0.5em}c}
\includegraphics[scale=0.24]{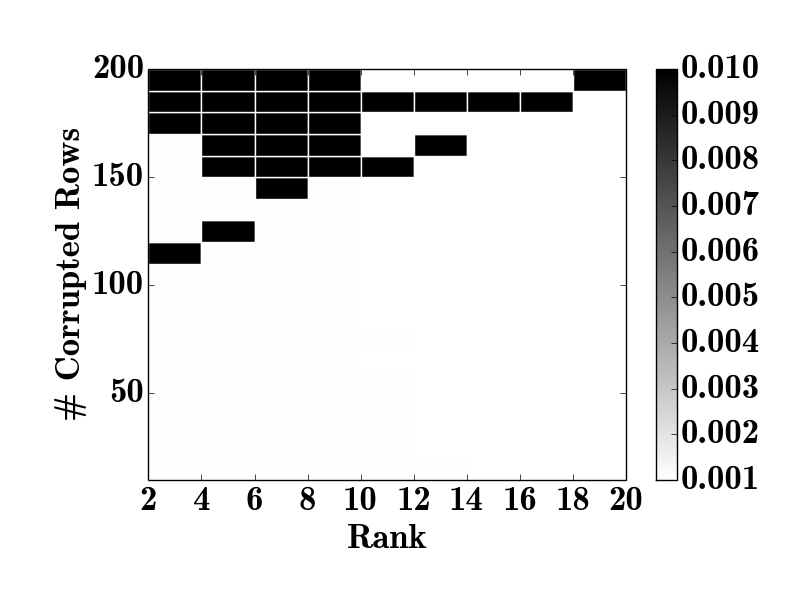} &
\includegraphics[scale=0.24]{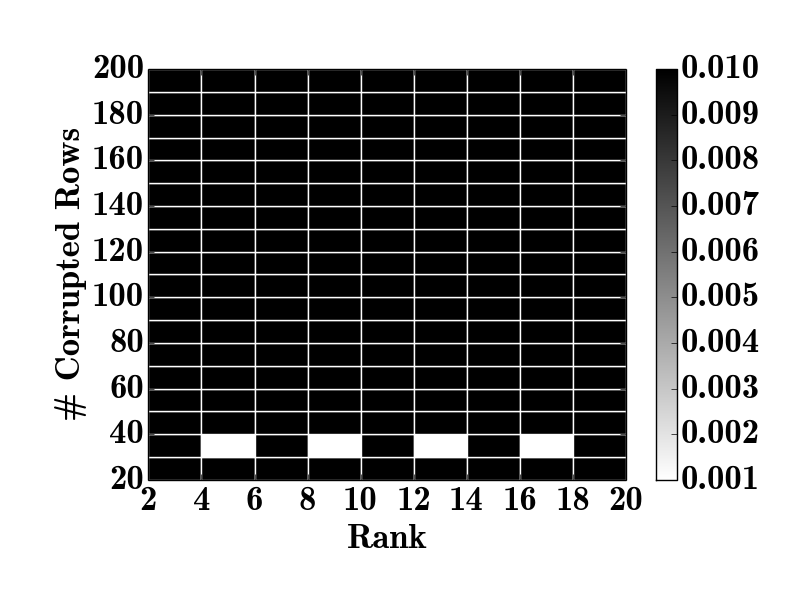} &
\includegraphics[scale=0.24]{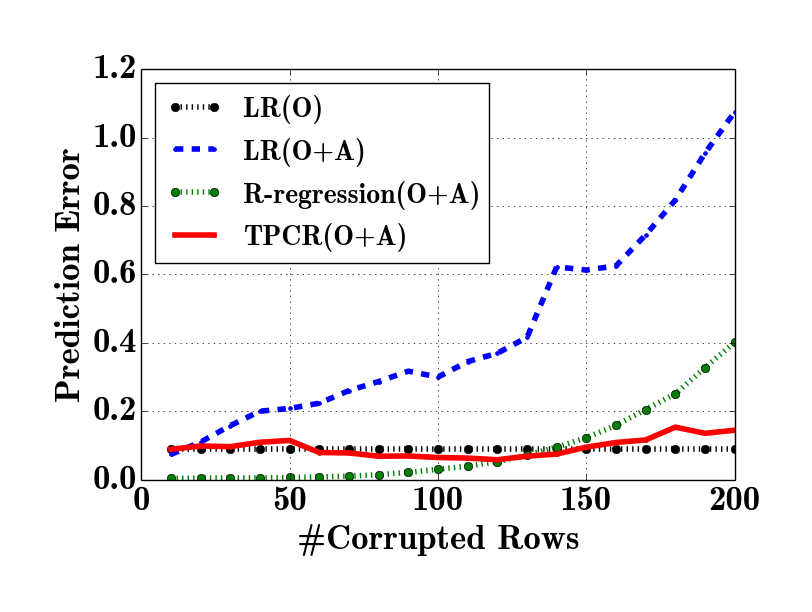}\\
(a) TPCR & (b) Xu et al.~\cite{xu2010} & (c)\\
\end{tabular}
\caption{(a) and (b) compares RMSE for T-PCR (a) and Xu et al.~\cite{xu2010} (b).
  (c) RMSE comparison with robust regression (Chen et al.
  \cite{chen2013robust}) and
  non-robust baseline.\vspace{-1em}}
\label{recovery}
\end{figure}


\noindent{\bf Robust Regression:} We evaluate our T-PCR
Algorithm~\ref{alg:1} against the robust regression method
of Chen et al.~\cite{chen2013robust}, which is the only alternative method for
linear regression robust to adversarial data poisoning.
As a baseline, we also present results for standard OLS linear
regression with and without adversarial instances (LR(O+A) and LR(O), respectively).
Results are evaluated using a ground truth test set not used for training.
The results, shown in Figure~\ref{recovery}(c), demonstrate that our
algorithm significantly outperforms the alternatives.
Indeed, while robust regression of Chen et al.~\cite{chen2013robust} does better than the
non-robust baseline, our method works nearly as well as linear
regression \emph{without} adversarial instances!

%

\ignore{
\subsection{Evaluation on Real World Dataset}
We evaluate our algorithms over a real world malicious domain dataset from RSA Lab. This dataset includes domains from February, March and July and the testing set is from August. Features are aggregated for each domain across all the hosts that connect to it, including number of connected hosts, number of countries, total sent bytes, etc. Each domain is manually labeled as malicious or benign by domain experts.

\begin{figure}[t]
\centering
\begin{tabular}{ccc}
\includegraphics[scale=0.3]{images/real_acc.png}&
\includegraphics[scale=0.3]{images/real_rmse.png}\\
(a) Accuracy on RSA &(b) RMSE on RSA.\\
\end{tabular}
\caption{Linear regression performance comparison for TPCR and related methods for (a) prediction accuracy for malicious domain dataset (b) RMSE for malicious domain dataset.}
\label{regression_syn}
\end{figure}
}

\section{Conclusion}
\vspace{-1em}

This paper considers the poisoning attack for linear regression problem with dimensionality reduction.
We address the problem in two steps: 1) introducing a novel robust matrix factorization method to recover the true subspace, and 2) novel robust principle component regression to prune adversarial instances based on the basis recovered in step (1).
We characterize necessary and sufficient conditions for our approach to be successful in recovering the true subspace, and present a bound on expected prediction loss compared to ground truth.
Experimental results suggest that the proposed approach is extremely effective, and significantly outperforms prior art.


\newpage
\appendix

\iftr

\section{Proof of Theorems in Section~\ref{sec:rmf}}
\label{proof:rmf}

Since, this section does not involve \y, we will omit \y without loss
of clarity.

\subsection{Theorem~\ref{thm:3:1}}
\begin{proof}[Proof of Theorem~\ref{thm:3:1}]
We prove by contradiction. Assume we have a learner
$\mathcal{L}_\text{recover}$, can solve Problem~\ref{prob:recover} with
probability more than 1/2. We want to show that there exist two different
spaces of rank-$k$, and one input $\X$ such that $\mathcal{L}_\text{recover}(\X_1)$
should return both two spaces with a probability $>1/2$,
which is impossible. In the following, we construct such two spaces.
Particularly, we will discuss how adversary can manipulate the matrix.

The adversary can choose
$\mathcal{I}$ which maximize $|\mathcal{I}|$ such that while
$\rank{\X_\star^\mathcal{I}}\leq k-1$. We know
$|\mathcal{I}|=\mathit{MS}_{k-1}(\X_\star)\geq n-n_1$.
This means that $|\mathcal{O}|-|\mathcal{I}|\leq n_1$.

Suppose $v_1,...,v_{k-1}$
be a set of basis for the row space of $\X_\star^\mathcal{I}$.
The adversary then choose a vector $v_k'$ which is orthogonal
to $\X_\star$. Then we know the span space of
$V'=\{v_1,...,v_{k_1}, v_k'\}$ is different from $\X_\star$.
Then the adversary draws $n_1$ samples from the span space of $V'$,
and insert them into $\X_\star$ to form $\X$.
Moreover, we denote $\X_\star'$ to be a matrix of $|\mathcal{I}|+n_1$
rows, so that the first $|\mathcal{I}|$ rows are $\X_\star^\mathcal{I}$,
and the rest $n_1$ rows sampled by the adversary. Therefore, we know
$\X_\star'$ is also a submatrix of $\X$, and we know that there
are at most $|\mathcal{O}|-|\mathcal{I}|\leq n_1$ rows in $\X$
not coming from $\X_\star'$.

In doing so, we know that $\X$ is constructed by corrupting $\X_\star$.
On the other hand, we can also see $\X$ as the result of corrupting
$\X_\star'$ by inserting $|\mathcal{O}|-|\mathcal{I}|\leq n_1$ rows.
Therefore, $\mathcal{L}_\text{recover}(\X_\star)$ should return
both $\X_\star$ and $\X_\star'$ with a probability greater than $1/2$,
which is impossible. Therefore, our conclusion holds true.
\end{proof}

\subsection{Theorem~\ref{thm:3:2}}
\begin{proof}[Proof of Theorem~\ref{thm:3:2}] We show that
Algorithm~\ref{alg:exact-pure} recovers the subspace of $\X_\star$
exactly. Assume $\B$ is returned by Algorithm~\ref{alg:exact-pure}
over $\X$. We only need to show that $\B$ is a basis of $\X_\star$.
By Algorithm~\ref{alg:exact-pure}, we know that $\B$ is
a basis of $n$ rows in $\mathcal{I}$ of $\X$. Since we know any
adversary can corrupt at most $n_1$ rows, thus
$|\mathcal{I}\cap\mathcal{A}|\leq n_1$. Therefore, by combining the
assumption $n_1+\mathit{MS}_{k-1}(\X_\star)<n$, we know that
\begin{equation}
|\mathcal{I}\cap\mathcal{O}|
=|\mathcal{I}|-|\mathcal{I}\cap\mathcal{A}|
\geq n-n_1
> \mathit{MS}_{k-1}(\X_\star)
\label{eq:lm:3:2:1}
\end{equation}
Therefore, we know $\B$ is a basis for $\X_\star^{\mathcal{I}\cap\mathcal{O}}$.
By the definition of $\mathit{MS}_{k-1}(\X_\star)$ and inequality~(\ref{eq:lm:3:2:1}),
we know that
\[\rank{\X_\star^{\mathcal{I}\cap\mathcal{O}}}=k\]
Therefore, we know that $\X_\star^{\mathcal{I}\cap\mathcal{O}}$ is exactly the
same subspace as $\X_\star$, and thus $\B$ is the basis of $\X_\star$.
\end{proof}

\subsection{Corollary~\ref{col:3:3}}
\begin{lm} $\mathit{MS}_{k-1}(\X_\star)\geq k-1$
\label{lm:app:1}
\end{lm}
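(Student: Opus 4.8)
The plan is to prove the bound by exhibiting a single feasible index set of size $k-1$ for the optimization problem that defines $\mathit{MS}_{k-1}(\X_\star)$. Recall that $\mathit{MS}_{k-1}(\X_\star)$ is the maximum of $|\mathcal{I}|$ over all index sets $\mathcal{I}$ satisfying $\rank{\X_\star^\mathcal{I}}\le k-1$. Hence it suffices to produce one such $\mathcal{I}$ with $|\mathcal{I}|=k-1$; the maximality in the definition of $\mathit{MS}_{k-1}$ then yields $\mathit{MS}_{k-1}(\X_\star)\ge k-1$ immediately.

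The key observation is the elementary rank--row-count inequality: the rank of any matrix is bounded above by its number of rows. Since $\rank{\X_\star}=k$, the matrix $\X_\star$ has at least $k$ rows, so in particular $n\ge k>k-1$, and we may freely pick any subset $\mathcal{I}\subseteq\{1,\dots,n\}$ of exactly $k-1$ row indices. For this choice the submatrix $\X_\star^\mathcal{I}$ has exactly $k-1$ rows, so $\rank{\X_\star^\mathcal{I}}\le k-1$ holds automatically, regardless of the actual entries. Thus $\mathcal{I}$ is feasible with $|\mathcal{I}|=k-1$, and the argument of the first paragraph concludes the proof.

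There is essentially no obstacle here: the statement is a structural consequence of the rank--row-count inequality, and the only thing to check is that $\X_\star$ genuinely contains at least $k-1$ rows so that a feasible $\mathcal{I}$ of the required size exists, which is guaranteed by the standing assumption $\rank{\X_\star}=k$. Worth noting explicitly is that the construction does not exploit any linear-dependence structure of $\X_\star$ whatsoever — \emph{any} choice of $k-1$ rows works — so the bound holds uniformly for every rank-$k$ feature matrix.
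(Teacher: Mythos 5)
Your proof is correct and matches the paper's argument exactly: both exhibit an index set of $k-1$ rows (the paper uses $\mathcal{I}=\{1,\dots,k-1\}$) and invoke the fact that a matrix with $k-1$ rows has rank at most $k-1$, making it feasible for the maximization defining $\mathit{MS}_{k-1}(\X_\star)$. Your additional check that $\X_\star$ has at least $k-1$ rows is a minor, harmless elaboration the paper leaves implicit.
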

\begin{proof} We can choose the $\mathcal{I}=\{1,...,k-1\}$, then we have
$\rank{\X_\star^{\mathcal{I}}}\leq k-1$. Therefore,
$\mathit{MS}_{k-1}(\X_\star)\geq |\mathcal{I}| = k-1$.
\end{proof}


Now, we can prove Corollary~\ref{col:3:3}.
\begin{proof}[Proof of Corollary~\ref{col:3:3}]
Given $\frac{n_1}{n}=\gamma\geq 1-\frac{k-1}{n}$, we have
\[n_1+(k-1)\geq n\]
Combining $\mathit{MS}_{k-1}(\X_\star)\geq k-1$,
we know
\[
n_1+\mathit{MS}_{k-1}(\X_\star)
\geq n_1+(k-1)
\geq n\]
Applying Theorem~\ref{thm:3:1}, we can conclude this corollary.
\end{proof}

\subsection{Theorem~\ref{thm:3:3}}
\begin{proof}[Proof of Theorem~\ref{thm:3:3}] The proof of this theorem is
similar to the proof of Theorem~\ref{thm:3:1} and~\ref{thm:3:2}. First,
when $\mathit{SR}(\X_0)\leq\mathit{NR}(\X_0)$, the adversary can construct
$\X$ such that two subspaces should be recovered with a probability greater
than $1/2$. Particularly, we assume $\mathcal{I}, \U, \B$ minimize objective
~\ref{E:SR}, and thus $\mathit{SR}(\X_0)=||\X_0^\mathcal{I}-\U\B||$.
The adversary samples $n_1$ rows $\X_\text{corrupt}$ from the span space
of $\B$, which does not belong to the span of $\X_\star$. We add a small noise
over $\X_\text{corrupt}$ to get $\X_1$, such that
(1) $\X_\text{corrupt}$ minimize $||\X_1-\X_\text{corrupt}||$;
and (2) $||\X_1-\X_\text{corrupt}||=\mathit{NR}(\X_0)-\mathit{SR}(\X_0)$.
Then the adversary insert $\X_1$ into $\X_0$ to get $\X$.
In this case, we know that $\X_\star$ optimizes its distance from $\X_0$,
while the $[\X_\star^\mathcal{I}; \X_\text{corrupt}]$ optimizes its distance
from $[\X_0^\mathcal{I};\X_\text{corrupt}]$, where we use $[A; B]$ to denote
the concatenation of rows from $A$ and $B$ respectively. Further, by definition,
we know both of these two distances is $\mathit{NR}(\X_0)$. Therefore,
the learner should recover from $\X$ both $\X_\star$ and
$[\X_0^\mathcal{I};\X_\text{corrupt}]$ with probability greater than $1/2$.
This is impossible! Therefore the first part of the theorem holds true.

For the second part, we follow the proof of Theorem~\ref{thm:3:2} verbatim,
and present the difference.
We show that Algorithm~\ref{alg:exact-noisy} recovers the subspace of
$\X_\star$ exactly. Assume $\B$ is returned by Algorithm~\ref{alg:exact-noisy}
over $\X$. We only need to show that $\B$ is a basis of $\X_\star$.
By Algorithm~\ref{alg:exact-noisy}, we know that $\B$ optimizes
its pan distance from a subset of $n$ rows in $\X$, which is denoted
as $\mathcal{I}$. Since we know any
adversary can corrupt at most $n_1$ rows, thus
$|\mathcal{I}\cap\mathcal{A}|\leq n_1$. Therefore, we know that
\begin{equation}
|\mathcal{I}\cap\mathcal{O}|
=|\mathcal{I}|-|\mathcal{I}\cap\mathcal{A}|
\geq n-n_1
\label{eq:lm:3:3:1}
\end{equation}
If $\B$ is not a basis of $\X_\star$, which means that $\X_\star\B^T\B\neq \X_\star$,
then we know that the distance between the span space of $\B$ and
$\X^{\mathcal{I}\cap\mathcal{O}}$ is greater than $\mathit{SR}(\X_0)>\mathit{NR}(\X_0)$.
This is impossible, since Algorithm~\ref{alg:exact-noisy} guarantees that
this distance should be no greater than $\mathit{NR}(\X_0)$. Contradiction!
Therefore the second part of the theorem holds true.
\end{proof}

\subsection{Theorem~\ref{thm:3:4}}
\begin{proof}[Proof of Theorem~\ref{thm:3:4}] When $\N=0$, we know that
$\mathit{SR}(\X_0)>\mathit{NR}(\X_0)$ if and only if $\mathit{SR}(\X_0)\neq 0$.
This means that for any $|\mathcal{I}|=n-n_1$, $\X_\star^\mathcal{I}=\U\B$
implies that $\X_\star\B^T\B=\X_\star$ (condition (\ref{E:SR}b)), which
implies that $\rank{\X_\star^\mathcal{I}|}=k$ for all $\mathcal{I}$.
Therefore, we know $\mathit{MS}_{k-1}(\X_\star)<n-n_1$, which
concludes this theorem.
\end{proof}
\newpage

\section{Proof of Lemma~\ref{lm:1}}
\label{proof:tpcr}

\begin{proof}
	Assume $\hat{\beta_U}$ is the solution for this optimization problem. 
	We assume the adversary wants to induce the regression system to compute $\hat{\beta_U}$.
    In this case, he has to corrupt $\gamma n$ rows in $U$. W.L.O.G. we can assume $\mathcal{O}=\{1,...,n_1\}$.
    We denote $\beta_U^\star=\B\beta^\star$. Since $\X_\star=\U_\star\B$, we know that
    \[\y-\X_\star\beta^\star=\y-\U_\star\beta_U^\star\]
	
	Since $\hat{\beta_U}$ optimize Eq (\ref{eq:obj}), we assume $(y_i-u_i\hat{\beta_U})^2$ are the smallest $n$ values for $i\in\{1,...,n\}$.

    Then we have
	
	\[\sum_{i=1}^{n_1} (y_i-u_i\hat{\beta_U})^2 + \sum_{i=n_1+1}^{n}(y_i-u_i\hat{\beta_U})^2 \leq \sum_{i=n_1+1}^{n+n_1} (y_i-u_i^T\beta_U^\star)^2\]
	
	Therefore we have
	
	\begin{equation}
		\sum_{i=n_1+1}^{n}(y_i-u_i\hat{\beta_U})^2 \leq \sum_{i=n_1+1}^{n+n_1} (y_i-u_i\beta_U^\star)^2
		\label{eq:2}
	\end{equation}
	
	Further, we know
	
	\begin{eqnarray}
		&&\sum_{i=n_1+1}^{n}(y_i-u_i\hat{\beta_U})^2\nonumber\\
		&=&\sum_{i=n_1+1}^{n}\bigg((y_i-u_i\beta_U^\star)+(u_i\beta_U^\star-u_i\hat{\beta_U})\bigg)^2 \nonumber\\
		&\geq&\sum_{i=n_1+1}^{n}\bigg\{ (y_i-u_i\beta_U^\star)^2+(u_i\beta_U^\star-u_i\hat{\beta_U})^2
                -2|y_i-u_i\beta_U^\star|\cdot|u_i\beta_U^\star-u_i\hat{\beta_U}| \bigg\} \nonumber\\
		&=& \sum_{i=n_1+1}^{n}(y_i-u_i\beta_U^\star)^2 + \sum_{i=n_1+1}^{n}(u_i(\beta_U^\star-\hat{\beta_U}))^2\nonumber\\
		&&\qquad\qquad\qquad- 2\bigg(\sum_{i=n_1+1}^{n}|u_i(\beta_U^\star-\hat{\beta_U})|\cdot |y_i-u_i\beta_U^\star|\bigg)
		\label{eq:3}
	\end{eqnarray}
	
According to Cauchy-Schwarz inequality, we have
    \[\bigg(\sum_{i=n_1+1}^{n}|u_i(\beta_U^\star-\hat{\beta_U})|\cdot |y_i-u_i\beta_U^\star|\bigg)^2
    \leq\bigg(\sum_{i=n_1+1}^{n}(u_i(\beta_U^\star-\hat{\beta_U}))^2\bigg)\cdot \bigg(\sum_{i=n_1+1}^n(y_i-u_i\beta_U^\star)^2\bigg)\]
    We assume $C=\sqrt{\sum_{i=n_1+1}^n (u_i(\beta^\star_U-\hat{\beta_U}))^2)}$, then, we have
    \begin{eqnarray}
    &&-2\bigg(\sum_{i=n_1+1}^{n}|u_i(\beta_U^\star-\hat{\beta_U})|\cdot |y_i-u_i\beta_U^\star|\bigg)\nonumber\\
    &\geq&-2\sqrt{\bigg(\sum_{i=n_1+1}^{n}(u_i(\beta_U^\star-\hat{\beta_U}))^2\bigg)\cdot \bigg(\sum_{i=n_1+1}^n(y_i-u_i\beta_U^\star)^2\bigg)}\nonumber\\
    &=&-2C\sqrt{\Sigma_{i=n_1+1}^n e_i^2}\nonumber
    \end{eqnarray}
    Substituting this inequality into (\ref{eq:3}) and combining with (\ref{eq:2}), we have
    \[\sum_{i=n_1+1}^{n+n_1}e_i^2\geq \sum_{i=n_1+1}^{n} e_i^2 + C^2 - 2C\sqrt{\Sigma_{i=n_1+1}^n e_i^2}\]
    By simple rearrangement, we have
	\[C^2 - 2C\sqrt{\sum_{i=n_1+1}^{n} e_i^2} \leq \sum_{i=n+1}^{n+n_1} e_i^2\]
	
	Since we know $y_i-u_i\beta_U^\star\sim\mathcal{N}(0, \sigma)$, we know that for any parameter $h>1$, we have
    $\text{Pr}(e_i\leq 2\sigma\sqrt{\log{h}})\geq 1-ch^{-2}$ for some constant $c$. Therefore, we know, with high probability (at least $1-ch^{-2}$), we have
	
	\begin{eqnarray}
		C^2 - 2\sqrt{n-n_1}C(2\sigma\sqrt{\log{h}})
        &\leq&C^2 - 2C\sqrt{\sum_{i=n+1}^{n} e_i^2} \nonumber\\
		&\leq&\sum_{i=n+1}^{n+n_1} e_i^2\nonumber\\
		&\leq&n_1(2\sigma\sqrt{\log{h}})^2\nonumber
	\end{eqnarray}
	
	Therefore, we have
	\[\bigg(C-2\sigma\sqrt{n-n_1}\sqrt{\log{h}}\bigg)^2 \leq n(2\sigma\sqrt{\log{h}})^2\]
	and thus
	\[C\leq 2\sigma\bigg(\sqrt{n}+\sqrt{n-n_1}\bigg) \sqrt{\log{h}}\]
    Therefore, we know
    \[\sqrt{\frac{\sum_{i=n_1}^n||u_i(\beta_U^\star - \hat{\beta_U})||^2}{n-n_1}}\leq 2\sigma\bigg(1+\sqrt{\frac{1}{1-\gamma}}\bigg) \sqrt{\log{h}}\]
    We notice the right hand side of the above inequality does not depend on $n, n_1$. Therefore, we take $n\rightarrow+\infty$,
    and we know that $n-n_1=(1-\gamma)n\rightarrow+\infty$,
    and apply the law of large numbers, we have
    \[\sqrt{E_u\big[(u(\beta_U^\star-\hat{\beta_U}))^2\big]}\leq 2\sigma\bigg(1+\sqrt{\frac{1}{1-\gamma}}\bigg) \sqrt{\log{h}}\]
    where left hand side is the same as $\sqrt{E_x\big[(x(\hat{\beta}-\beta^\star))^2]}$.
    Then the conclusion of Lemma~\ref{lm:1} is a simple rearrangement of the above inequality.
\end{proof}

\section{Proof of Theorem~\ref{thm:conv}}
\label{proof:conv}

We consider 
\[\ell_j = \sum_{i=1}^{n+n_1} \tau_i^{(j)} l(y_i, f_{\theta^{(j)}}(x_i))\]
and
\[\ell_j' = \sum_{i=1}^{n+n_1} \tau_i^{(j)} l(y_i, f_{\theta^{(j+1)}}(x_i))\]

According to the algorithm, it is easy to see that
\[\ell_j\geq\ell_j'\geq\ell_{j+1}\]

Therefore, $\ell_j$ is a monotonic decreasing sequence. Since a lower bound exists on the sequence, we assume
$\overline{B}=\inf\ell_j$ is the inferior of the sequence $\ell_j$. Therefore, we know that
\[\lim_{j\rightarrow+\infty} \ell_j-\overline{B} = \lim_{j\rightarrow+\infty} |\ell_j-\overline{B}| = 0\]

Further, we know that
\[0\leq |\ell_{j+1}-\ell_{j}| \leq |\ell_j-B|\]
and thus
\[0\leq \lim_{j\rightarrow+\infty}|\ell_{j+1}-\ell_{j}| \leq \lim_{j\rightarrow+\infty}|\ell_j-B| = 0\]

Therefore, we have
\[\lim_{j\rightarrow+\infty}|\ell_{j+1}-\ell_{j}|=0\]
Q.E.D.

\fi

\end{document}